\newcommand{\R}{{\mathbb R}}
\newcommand{\n}{{\mathbf n}}
\renewcommand{\v}{{\mathbf v}}
\newcommand{\z}{{\mathbf z}}
\newcommand{\x}{{\mathbf x}}
\newtheorem{theorem}{Theorem}
\newtheorem{proposition}{Proposition}
\newtheorem{lemma}[theorem]{Lemma}
\title{Semantic Interpolation in Implicit Models}
\author{Yannic Kilcher, Aur\'elien Lucchi, Thomas Hofmann  \\
Department of Computer Science\\
ETH Zurich\\
\texttt{\{yannic.kilcher,aurelien.lucchi,thomas.hofmann\}@inf.ethz.ch}
}
\begin{document}

\maketitle

\begin{abstract}
In implicit models, one often interpolates between sampled points in latent space. As we show in this paper, care needs to be taken to match-up the distributional assumptions on code vectors with the geometry of the interpolating paths.  Otherwise, typical assumptions about the quality and semantics of in-between points may not be justified. Based on our analysis we propose to modify the prior code distribution to put significantly more probability mass closer to the origin. As a result, linear interpolation paths are not only shortest paths, but they are also guaranteed to pass through high-density regions, irrespective of the dimensionality of the latent space. Experiments on standard benchmark image datasets demonstrate clear visual improvements in the quality of the generated samples and exhibit more meaningful interpolation paths.
\end{abstract}

\section{Introduction}

Continuous latent variable models have been developed and studied in statistics for almost a century, with factor analysis (\cite{young1941maximum,bartholomew1987factor}) being the most paradigmatic and widespread model family.  In the neural network community, autoencoders have been used to find low-dimensional codes with low reconstruction error (\cite{baldi1989neural,demers1993n}). Recently there has been an increased interest in \textit{implict models}, where a complex generative mechanism is driven by a source of randomness (cf.~\cite{mackay1995bayesian}). This includes popular architectures known as Variational Autoencoders (VAEs, \cite{Kingma:2013tz,Rezende:2014vm}) as well as Generative Adversarial Networks (GANs,  \cite{Goodfellow:2014td}). Typically, one defines a deterministic mechanism or generator $G_\phi: \R^d \to \R^m$, $\z \mapsto G_\phi(\z) =\x$, parametrized by $\phi$ and often implemented as a deep neural network (DNN). This map is then hooked up to a code distribution $\z \sim \mathcal P_\z$, to induce a distribution $\x \sim \mathcal P_\x$. It is known that under mild regularity conditions, by a suitable choice of generator, any $\mathcal P_\x$ can be obtained from an arbitrary \textit{fixed} $\mathcal P_\z$ (cf.~\cite{kallenberg2006foundations}). Relying on the representational power and flexibility of DNNs, this has led to the view that code distributions should be simple, e.g.~most commonly $\mathcal P_\z = \mathcal N(\mathbf 0, \mathbf I)$. Implicit models are essentially sampling devices that can be trained and used without explicit access to the densities they define. They have shown great promise in producing samples that are perceptually indistinguishable from samples generated by nature (\cite{Radford:2015wf}) and are currently a subject of extensive investigation.

Earlier work on embedding models such as the word embeddings of \cite{Mikolov:2013wc}, has shown how semantic relations and analogies are naturally captured by the affine structure of embeddings (\cite{mikolov2013linguistic,levy2014linguistic}). This has inspired the use of affine vector arithmetic and linear interpolation in implicit models such as GANs, where it has shown to lead to semantic interpolations in image space (cf.~\cite{Radford:2015wf, white2016sampling}). These \textit{traversal} experiments have also been used to justify that deep generative models do not only memorize the training data, but do learn models that generalize to unseen data. 
However, as pointed out by~\cite{white2016sampling}, the commonly used linear interpolation has one major flaw in that it ignores the manifold structure of the latent space. Indeed, traversing the latent space along straight lines may lead through low-density regions, where -- by definition -- the generator has not been trained (well). This is easy to understand as for $\z \sim \mathcal N(\mathbf 0, \mathbf I)$ we get that 
\begin{align}
\| \z \|^2 \sim \chi^2(d), \quad \text{and hence} \quad \mathbf E \| \z\|^2 = d, \quad 
\text{Var} \left[ \frac{\|\z\|^2}{d} \right] = \frac 2d \,.
\label{eq:normsq-chi2}
\end{align}
Relative to the average, the variance of the squared code vector norm vanishes like $\mathbf O(\tfrac{1}{d})$ with the latent space dimensionality. Thus as $d$ increases, the probability mass concentrates in a thin shell around the $(d-1)$-sphere with radius $\sqrt d$. As shown in Figure~\ref{fig:mass_distribution}, the interpolation paths between any two points on this sphere will always pass through the interior, and the larger their distance, the closer the path's midpoint will be to the origin.

\begin{figure}
    \centering
    \includegraphics[width=0.75\linewidth]{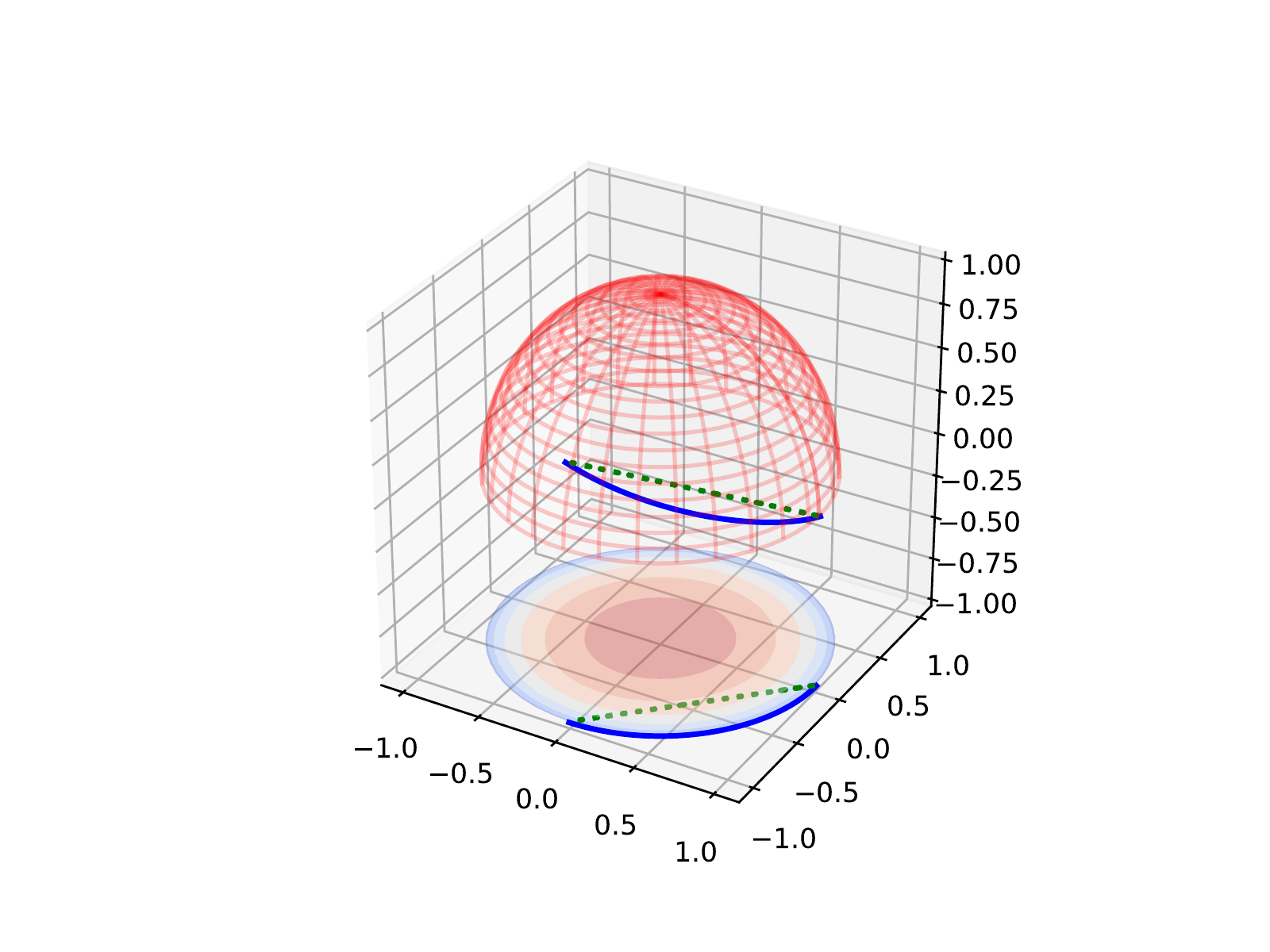}
    \caption{Illustration showing the distribution of the mass for a Normal prior distribution. In high-dimensions, most the mass concentrates in the outer shell shown in blue while the middle shown in red is almost massless. Linearly interpolating between 2 points produces the green dotted trajectory thus traversing a region of the space with low probability mass. In contrast, following the great circle arc in blue produces samples that are more likely but these paths can also be unstable as they exhibit higher variance (see main text).}
    \label{fig:mass_distribution}
\end{figure}

Although we are not the first to point out this weakness, there has not been any rigorous attempts to analyze this phenomenon and to come up with a substantially improved interpolation scheme. The proposal by \cite{white2016sampling} is to use spherical linear interpolation. However, note that this scheme produces interpolation curves that are very similar to great circle arcs. These paths can be unstable (think of slightly perturbing points at opposing poles of the hyper-sphere) as well as unnecessarily long, passing  through codes of images, say, that have very little in common with the ones at either endpoint.

Towards this goal, we make the following contributions:
\begin{enumerate}[i)]
\item We properly analyze the phenomenon by characterizing the KL-divergence between the latent code prior and the effective distribution induced on interpolated in-between points. 
\item We propose a modified prior distribution that effectively puts more probability mass closer to the origin and that provably controls the KL-divergence in a dimension-independent manner. 
\item We argue that linear interpolation by straight lines in this new model does not suffer from the problem identified in the original model. 
\item We provide extensive experiments that demonstrate the different nature of the interpolation paths and that show clear evidence of improved visual quality of in-between samples for images.
\end{enumerate}
 
\clearpage
\section{Method}

\subsection{Naive Sampling from a Normal Distribution}

\paragraph{Distributional mismatch}
We consider the common GAN framework with generator $G$ and latent code vectors sampled from an isotropic normal distribution, i.e. $\z \sim \mathcal{N}({\mathbf 0}, \sigma^2\mathbf{I})$. In the typical traversal experiment, one considers two code vectors $\z_0, \z_1 \in \R^d$ sampled independently and interpolates between them with a straight line $h: [0;1] \to \R^d$, $t \mapsto h(t) = (1-t)  \z_0 + t \z_1$.  In doing so, one expects, for instance, that the mid-point $\z' = h(\tfrac{1}{2})$ should correspond to a sample that semantically relates to both $G(\z_0)$ and $G(\z_1)$. 
However, based on the arguments made before, it is often found in practice that the code $h(\tfrac{1}{2})$ falls in a latent space region of low probability mass. Consequently, the generated samples are often not representative of the data distribution. To elucidate this further, note that the distribution of the squared norm of midpoints can be shown to follow a Gamma distribution, namely,
\begin{align}
    \| \z' \|^2 = \left \|\frac{\z_0 + \z_1}{2} \right \|^2 = \frac{2\sigma^2}{4}\|\n\|^2 \quad \text{where } \n \sim \mathcal{N}({\mathbf 0}, \mathbf{I})
\end{align}
Thus,
\begin{align}
    \| \z' \|^2 \sim \frac{\sigma^2}{2} \chi^2(d) = \Gamma(\tfrac{d}{2},\sigma^2)
\end{align}
In particular this implies that in expectation the norm of the midpoint is a factor of $\tfrac{1}{\sqrt 2}$ smaller than the norm at the endpoints. What conclusion can we draw from this observation? Mainly that the process used to train the generator network and the evaluation strategy used when traversing the latent space  are not consistent. Indeed, at training time, the generator network is trained with vectors whose squared norms follow a $\sigma^2 \chi^2(d)$ distribution. At test time, however, the traversal procedure passes through midpoints whose squared norms follow a $\tfrac{\sigma^2}{2}\chi^2(d)$ distribution. Clearly this leads to a problematic train-test mismatch.

\paragraph{Formal Analysis} Before we formalize the observations we made so far, let us collect some properties of  the $\chi^2$-distribution that are needed for the analysis below. In statistics, the $\chi^2$ distribution is usually introduced via Eq.~\eqref{eq:normsq-chi2}. Its density has support on the non-negative reals and can shown to be given by 
\begin{equation}
f_{\chi^2}(u;d)=\frac{1}{2^\frac{d}{2}\Gamma\left(\frac{d}{2}\right)} u^{\frac{d}{2}-1}e^{-\frac{u}{2}},\quad \text{for } u \geq 0 
\label{eq:density_chi_square}
\end{equation}
The $\sigma^2\chi^2(d)$ distribution is a special case of a Gamma distribution $\Gamma\left(\frac{d}{2},2\sigma^2\right)$ with shape parameter $\tfrac{d}{2}$ and scale $2\sigma^2$. This generalization is helpful as we can now also identify the midpoint distribution as a Gamma distribution, namely $\Gamma(\frac{d}{2}, \sigma^2)$. The following lemma gives a characterization of the KL-divergence between these two latent space distributions.  
\begin{lemma}
    Let $\z \sim \Gamma(\frac{d}{2},2 \sigma^2)$ and $\z' \sim \Gamma(\frac{d}{2}, \sigma^2)$ for any $\sigma>0$, then
\begin{equation}
    \text{KL}(\z \| \z') = \frac{d}{2}(1 - \log{2})
\label{eq:KL_z}
\end{equation}
\label{lemma:KL_divergence_z}
\end{lemma}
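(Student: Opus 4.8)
The plan is to evaluate the KL-divergence directly from its definition as the expectation of the log-likelihood ratio, exploiting the fact that both distributions share the same shape parameter $k = \tfrac{d}{2}$ and differ only in their scale. I would begin by writing both densities in the generic Gamma form $f(u;k,\theta) = \frac{1}{\Gamma(k)\,\theta^k}\, u^{k-1} e^{-u/\theta}$, with scale $\theta_1 = 2\sigma^2$ for $\z$ and $\theta_2 = \sigma^2$ for $\z'$, which is exactly the parametrization identified in the paragraph preceding the lemma.

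The key step is to form the log-ratio $\log\frac{f(u;k,\theta_1)}{f(u;k,\theta_2)}$ and observe what survives. Because the shape parameters agree, the normalizing $\Gamma(k)$ factors cancel and, crucially, so do the $u^{k-1}$ factors. What remains is strikingly simple: a constant term $k\log\frac{\theta_2}{\theta_1}$ plus a term linear in $u$, namely $u\left(\frac{1}{\theta_2} - \frac{1}{\theta_1}\right)$. There is no residual $\log u$ term. This is the decisive simplification, since it means the computation never encounters the digamma function $\psi(k)$ that would otherwise appear in a general Gamma-versus-Gamma KL-divergence. Taking the expectation under $\z \sim \Gamma(k,\theta_1)$ then only requires the elementary first moment $\mathbf E[u] = k\theta_1$.

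Assembling these pieces gives $\text{KL}(\z\|\z') = k\log\frac{\theta_2}{\theta_1} + k\left(\frac{\theta_1}{\theta_2} - 1\right)$, and substituting $k = \tfrac{d}{2}$, $\theta_1 = 2\sigma^2$, $\theta_2 = \sigma^2$ collapses this to $\tfrac{d}{2}(-\log 2) + \tfrac{d}{2}(2-1) = \tfrac{d}{2}(1 - \log 2)$, as claimed. As a sanity check, the dependence on $\sigma$ cancels, which it must, since the KL-divergence is invariant under a common rescaling of both random variables. There is no genuine obstacle here; the only pitfall worth flagging is the temptation to quote the full Gamma-versus-Gamma formula with its digamma terms and then simplify. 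It is cleaner and far less error-prone to notice at the outset that equal shapes eliminate the $\log u$ term, so that only $\mathbf E[u]$ is ever needed.
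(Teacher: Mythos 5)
Your proof is correct, and it is exactly the ``straightforward calculus'' the paper invokes without writing out: a direct computation of the KL-divergence from the log-density ratio, where the equal shape parameters $k=\tfrac{d}{2}$ cancel the $u^{k-1}$ factors so that only the first moment $\mathbf{E}[u]=k\theta_1$ is needed, yielding $\tfrac{d}{2}(1-\log 2)$. Your remark that the $\sigma$-dependence must cancel by scale invariance is a sound sanity check and consistent with the paper's result.
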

\begin{proof}
Using straightforward calculus.
\end{proof}
In summary, $\text{KL}(\z \| \z')$ strictly increases with $d$, growing linearly in $d$ with the given rate. However, as pointed out by \cite{arjovsky2017towards}, we need a sufficiently high latent space dimension that at least matches the intrinsic dimensionality of the data manifold. Otherwise  it is impossible for $\mathcal P_\z$ to be continuous and then stability issues commonly observed with GANs may occur. Thus it seems hard to avoid the blow-up of the KL-divergence that comes with large $d$. 

\paragraph{Spherical interpolation} One remedy to counteract the above problem is to use spherical interpolation, essentially generalizing the notion of geodesics on a hyper-sphere. We have already eluded to the proposal of \cite{white2016sampling}, which uses the interpolation with $\theta := \angle(\z_0,\z_1)$,
\begin{align}
h(t) = \frac{\sin((1-t) \theta)}{\sin(\theta)} \z_0 + \frac{\sin(t\theta)}{\sin(\theta)} \z_1,
\quad \z' = h(\tfrac 12)= \frac{\sin(\tfrac\theta2)}{\sin(\theta)} (\z_0 + \z_1) = \frac{1}{\cos(\tfrac \theta 2)} \frac{\z_0 + \z_1}{2}\,.
\end{align}
It is easy to check that if $\|\z_0\| = \|\z_1\| = r$, then this curve follows the great circle with radius $r$ that connects $\z_0$ and $\z_1$. In the more general case as long as $|\theta| \le \pi/2$, we get the bounds $ \min\{ \| \z_0 \|, \|\z_1\|\} \le \| h(t)\| \le \max\{ \|\z_0\|, \|\z_1\|\}$. While this interpolation formula may be appropriate in the original context of animating rotations as in \cite{shoemake1985animating}, it is known that for larger angles it does not lead to semantically meaningful interpolation paths for GANs as these paths get too long, often passing through images that are seemingly unrelated. In addition, spherical interpolation destroys the simple affine vector arithmetic that has proven so useful in other contexts and that has shown to disentangle the nonlinear factors of variation into simple linear statistics (\cite{Mikolov:2013wc}). Therefore, it has been our goal to fix the divergence problem in a way that allows us to stick to linear interpolation in code space.

\subsection{Gamma Distance Model}

There is nothing sacred about the isotropic normal distribution as a code vector distribution other than a certain non-informativeness in the absence of other requirements. Here we suggest to keep the isotropic nature of the latent space distribution, but to modify the distribution of the norm or distance from the origin. Thus we factor the distribution as follows:
\begin{align}
\v \sim \text{Uniform}(\mathcal S^{d-1}), \quad r \sim \mathcal P_r, \quad \z = \sqrt r \,\, \v \,. 
\label{eq:isotropic}
\end{align}
The choice $\mathcal P_r = \chi^2(d)$ brings us back to the normal case and the problems that come with it. Instead, we eliminate the dimension dependency in the choice of $\mathcal P_r$. One simple way to accomplish that is to stay within the family of $\Gamma$-distributions with fixed shape and set
\begin{align}
\mathcal P_r = \Gamma(\tfrac{1}{2}, \theta), \quad \theta >0 \,.
\label{eq:gamma-radius}
\end{align}
In particular, if $\theta = 2$, this results in the same marginal distribution over norms than in the $1$-dimensional Gaussian case, thereby counteracting the concentration effect on the hyper-sphere of radius $\sqrt d$. 

\begin{proposition}
Using the model described in Eqs.~\eqref{eq:isotropic}-\eqref{eq:gamma-radius}, we have 
\begin{align}
    \|\z_0\|^2, \|\z_1\|^2 \sim \Gamma(\tfrac{1}{2}, \theta),  \quad \left\| \frac{\z_0 + \z_1}{2} \right\|^2   \sim \Gamma(\tfrac{1}{2}, \tfrac{1}{2}\theta).
\end{align}
Furthermore the KL divergence between $\z$ and mid points $\z'$ is given by
\begin{equation}
    \text{KL}(\z \| \z') = \frac{1}{2}(1 - \log{2}) \approx 0.35.
\label{eq:KL_gamma_sampling}
\end{equation}
\label{proposition:gamma_sampling}
\end{proposition}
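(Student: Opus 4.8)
The plan is to handle the three assertions in order, since the endpoint law and the KL value are essentially bookkeeping while the midpoint law carries the real content. First I would dispose of the endpoint claim directly: the construction in Eq.~\eqref{eq:isotropic} forces $\v\in\mathcal S^{d-1}$ to have unit norm, so $\|\z_i\|^2 = r_i\|\v_i\|^2 = r_i$ holds deterministically, and $\|\z_0\|^2,\|\z_1\|^2\sim\Gamma(\tfrac12,\theta)$ is then immediate from Eq.~\eqref{eq:gamma-radius}. No approximation enters here.

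For the midpoint I would mirror the computation behind the normal case. The parallelogram identity gives
\begin{align*}
\left\| \frac{\z_0+\z_1}{2}\right\|^2
= \tfrac14\!\left(\|\z_0\|^2 + \|\z_1\|^2 + 2\langle \z_0,\z_1\rangle\right)
= \tfrac14\!\left( r_0 + r_1 + 2\sqrt{r_0 r_1}\,\langle \v_0,\v_1\rangle\right),
\end{align*}
and by isotropy the directional inner product $\langle\v_0,\v_1\rangle$ is symmetric about $0$ and independent of $(r_0,r_1)$, so the cross term vanishes in expectation and the midpoint inherits its scale from the radial parts alone. The cleanest route to the stated law is the one-dimensional representation of the radius: $\Gamma(\tfrac12,\theta)$ is exactly the law of $g^2$ with $g\sim\mathcal N(0,\tfrac\theta2)$ (which is why $\theta=2$ reproduces the scalar Gaussian norm). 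Running the same average-of-two-Gaussians argument used for the normal midpoint above, $\tfrac{g_0+g_1}{2}\sim\mathcal N(0,\tfrac\theta4)$, whose square is $\tfrac\theta4\chi^2(1)=\Gamma(\tfrac12,\tfrac12\theta)$, as claimed.

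The hard part will be that for $d\geq 2$ this one-dimensional reduction is not exact. The factor $\langle\v_0,\v_1\rangle$ is genuinely random (mean $0$, variance $1/d$), and, more delicately, two independent shape-$\tfrac12$ radial variables add to shape $1$ rather than $\tfrac12$: the cross term aside, $\tfrac14(r_0+r_1)\sim\Gamma(1,\tfrac14\theta)$, not $\Gamma(\tfrac12,\tfrac12\theta)$. All three laws share the mean $\tfrac14\theta$, so the claim is mean-consistent in every dimension, but the variances differ. I would therefore read the midpoint assertion as an exact statement about the radial coordinate in its one-dimensional representation --- precisely the dimension-independent quantity the section is designed to control --- rather than as the exact multivariate law of $\|\z'\|^2$ for general $d$; this is the step I would scrutinise most carefully in the author's proof.

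Finally the KL value follows from Lemma~\ref{lemma:KL_divergence_z} with no further work. That lemma is the instance $k=\tfrac d2$ of the identity $\text{KL}\big(\Gamma(k,2c)\,\big\|\,\Gamma(k,c)\big)=k(1-\log 2)$, in which the equal shapes cancel the digamma and log-Gamma contributions and only the scale ratio $2$ survives. Here the endpoint and midpoint radial laws are $\Gamma(\tfrac12,\theta)$ and $\Gamma(\tfrac12,\tfrac12\theta)$, i.e.\ the same pattern with $k=\tfrac12$, so substituting $k=\tfrac12$ gives $\text{KL}(\z\|\z')=\tfrac12(1-\log 2)\approx 0.35$, which is Eq.~\eqref{eq:KL_gamma_sampling}. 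The only change relative to the normal case is the replacement of the dimension-dependent shape $\tfrac d2$ by the constant $\tfrac12$, which is exactly the dimension-independence the construction aims for.
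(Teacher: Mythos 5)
The paper states Proposition~\ref{proposition:gamma_sampling} with no proof at all (just as Lemma~\ref{lemma:KL_divergence_z} is dispatched with ``straightforward calculus''), so the comparison here is between your reasoning and what the authors implicitly assumed. Your handling of the first and third claims is correct and is surely the intended argument: $\|\z_i\|^2 = r_i\|\v_i\|^2 = r_i \sim \Gamma(\tfrac12,\theta)$ holds deterministically from Eqs.~\eqref{eq:isotropic}--\eqref{eq:gamma-radius}, and the KL value is the equal-shape gamma identity $\text{KL}\bigl(\Gamma(k,2c)\,\|\,\Gamma(k,c)\bigr)=k(1-\log 2)$ evaluated at $k=\tfrac12$ rather than $k=\tfrac d2$. (Implicitly, both you and the paper identify the KL between the isotropic vector laws with the KL between their radial laws; this is legitimate because the uniform angular factor is common to both and independent of the radius.) One thing you repeated without comment: in natural logarithms $\tfrac12(1-\log 2)\approx 0.153$; the paper's ``$\approx 0.35$'' only comes out if $\log$ is read base ten.

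The step you singled out for scrutiny is exactly where the proposition fails as a statement about the multivariate law, and you should state this as a finding rather than a caveat. Your moment check is decisive once completed: with $C=\langle\v_0,\v_1\rangle$ independent of $(r_0,r_1)$, $\mathbf E[C]=0$ and $\mathbf E[C^2]=\tfrac1d$, one gets $\mathbf E\|\z'\|^4=\tfrac{\theta^2}{16}\bigl(2+\tfrac1d\bigr)$, whereas $\Gamma(\tfrac12,\tfrac12\theta)$ has second moment $\tfrac{3\theta^2}{16}$; these agree only at $d=1$, where the model collapses to the scalar Gaussian $\mathcal N(0,\tfrac\theta2)$ and your average-of-Gaussians argument is exact. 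The authors transplanted the Gaussian scaling argument --- where closure of the family under addition makes ``averaging halves the scale'' exact in every dimension --- to a non-Gaussian isotropic family, where it is false; as $d\to\infty$ the cross term vanishes and $\|\z'\|^2$ converges instead to $\tfrac14(r_0+r_1)\sim\Gamma(1,\tfrac14\theta)$. What survives, and is all the paper's conclusion really needs, is dimension-independence in the sense that matters: the KL between the endpoint law and the \emph{true} midpoint law stays bounded in $d$ (tending to the divergence against $\Gamma(1,\tfrac14\theta)$, about $0.52$ nats), in contrast to the $\Theta(d)$ growth in Lemma~\ref{lemma:KL_divergence_z}. So your proof of the provable claims is right, and your refusal to certify the midpoint law for general $d$ identifies a genuine error in the proposition, not a gap in your own argument.
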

What have we gained? We would like to make two observations: (i)
Eq~\eqref{eq:KL_gamma_sampling} shows that the KL divergence is constant and does not grow with the latent space dimension.
(ii) While retaining a constant divergence, the gamma sampling procedure still offers the ability to tune the noise level through the free scale parameter $\theta$ (similar to $\sigma$ for the original sampling procedure).

\clearpage
\FloatBarrier
\section{Experiments}

\paragraph{Experimental results.} The setup used for the experiments presented below closely follows popular setups in GAN research and is detailed in the Appendix.

\subsection{Samples from GAN with $\Gamma$-Prior}

Figure~\ref{fig:samples_gamma_vs_normal} compares the samples generated from a Normal prior to the gamma prior for different benchmark image datasets.
In addition to straightforwardly replacing the noise sampling, we used no additional tricks to obtain these results.
This shows that GANs with gamma priors can be trained just as easily as traditional GANs.

\begin{figure}[ht]
    \centering
    
    \subfigure[Samples from CelebA with normal (left) and gamma (right) prior]{
    \includegraphics[width=.45\textwidth]{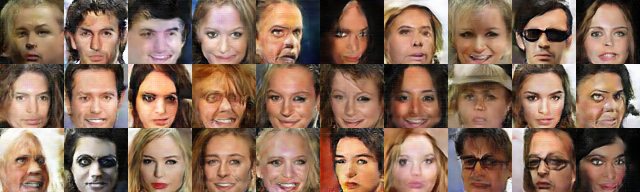}
    \includegraphics[width=.45\textwidth]{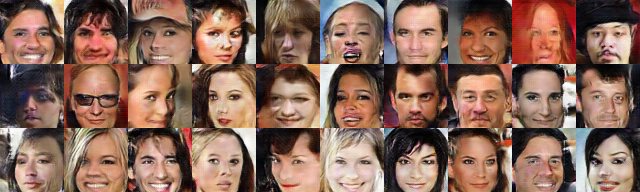}
    }
    
    \subfigure[Samples from LSUN kitchen with normal (left) and gamma (right) prior]{
    \includegraphics[width=.45\textwidth]{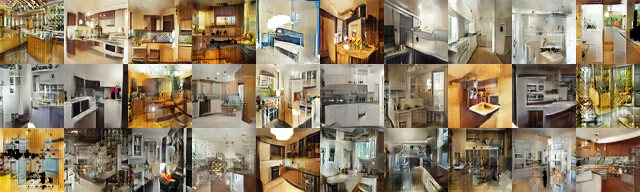}
    \includegraphics[width=.45\textwidth]{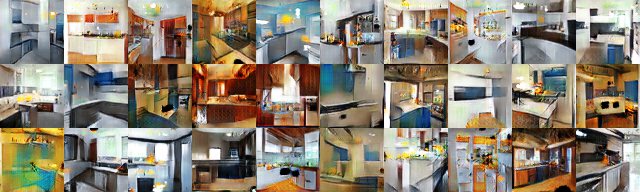}
    }
    
    \subfigure[Samples from LSUN bedroom with normal (left) and gamma (right) prior.]{
    \includegraphics[width=.45\textwidth]{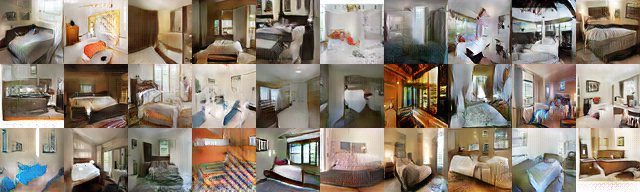}
    \includegraphics[width=.45\textwidth]{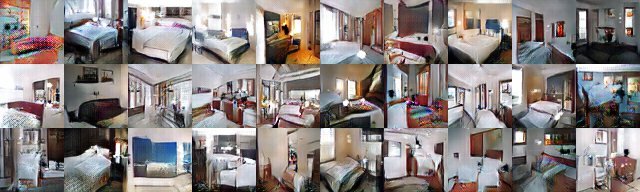}
    }

    \subfigure[Samples from MNIST with normal (left) and gamma (right) prior.]{
    \includegraphics[width=.45\textwidth]{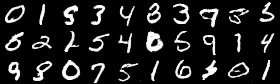}
    \includegraphics[width=.45\textwidth]{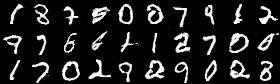}
    }

    \subfigure[Samples from SVHN with normal (left) and gamma (right) prior.]{
    \includegraphics[width=.45\textwidth]{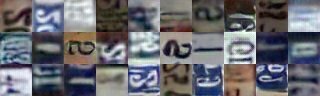}
    \includegraphics[width=.45\textwidth]{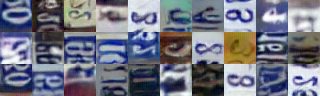}
    }

    \subfigure[Samples from CIFAR10 with normal (left) and gamma (right) prior.]{
    \includegraphics[width=.45\textwidth]{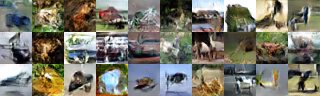}
    \includegraphics[width=.45\textwidth]{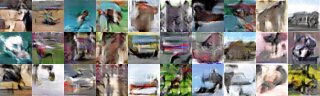}
    }

	\label{fig:samples_gamma_vs_normal}
\end{figure}

\FloatBarrier
\subsection{Traversal Experiments}
Here we perform two different types of traversals from the same pair of points of same length lying on opposite sides of the center of the sphere. While one traversal goes straight (in a Euclidean sense) through the middle, the other traversal goes along a geodesic on the sphere.

We compare the two traversal paths for a model trained using a multivariate normal prior and for a model using our suggested gamma prior.
Along with sampled traversal trajectories, we also show the discriminator activation along these trajectories, averaged over 1000 trajectory samples.
Plotted are the mean discriminator activation and one standard deviation.

More traversal samples can be found in the Appendix.

\subsubsection{Sphere Geodesic Traversal}
Figures~\ref{fig:trav:geo:normal} and~\ref{fig:trav:geo:gamma} show traversals and discriminator activation along a great circle on the sphere in latent space.
Note that often, the path taken is visiting realistic and interesting samples, but is not semantically interpolating between the given pair of endpoints.
Also note that the discriminator activation stays the same along the paths, meaning it judges all samples as equally likely.

\begin{figure}[ht]
    \centering
    \includegraphics[width=.9\textwidth]{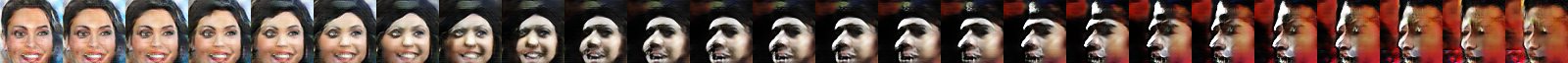}
    \includegraphics[width=.9\textwidth]{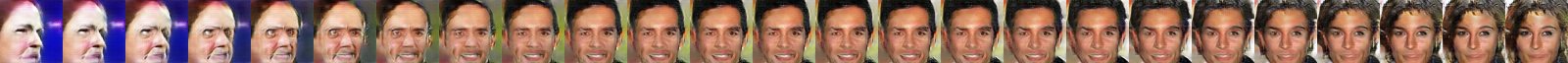}
    \includegraphics[width=.9\textwidth]{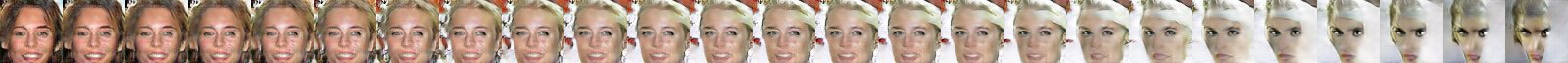}
    \includegraphics[width=.9\textwidth]{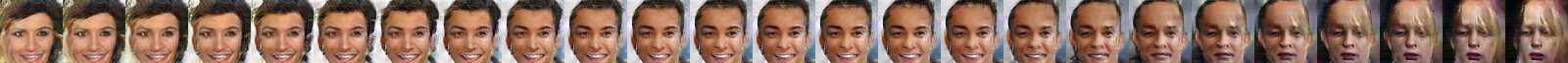}
    \includegraphics[width=.9\textwidth]{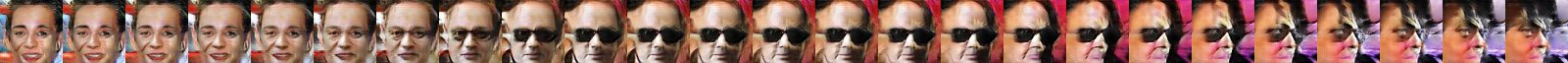}
    \includegraphics[width=.9\textwidth]{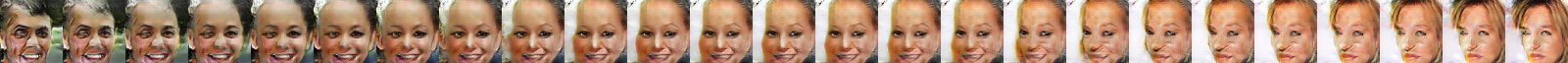}
    \includegraphics[width=.9\textwidth]{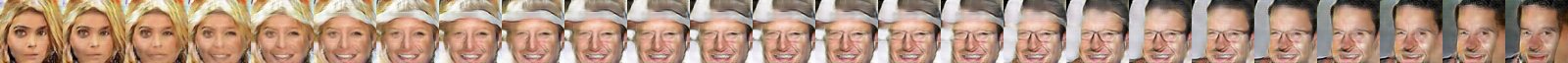}
    \includegraphics[width=.9\textwidth]{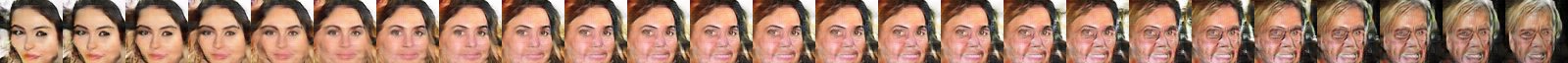}
    \includegraphics[width=.9\textwidth]{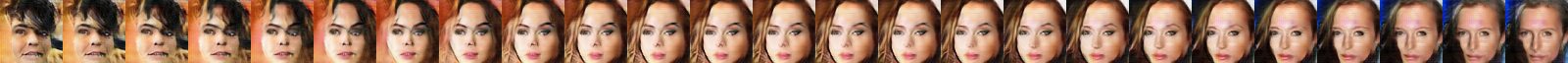}
    \includegraphics[width=.9\textwidth]{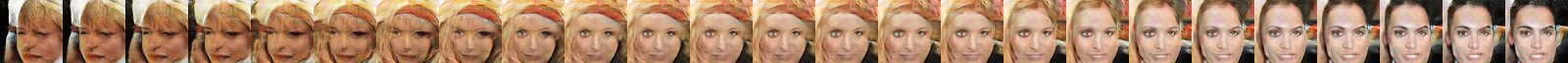}
    \includegraphics[width=.9\textwidth]{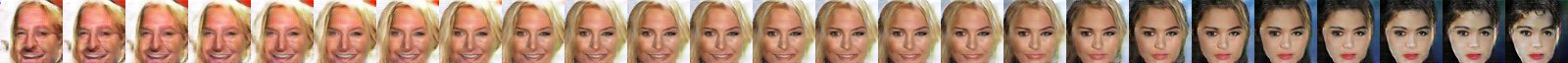}
    \includegraphics[width=.9\textwidth]{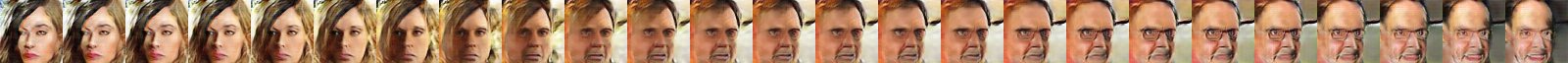}
    \includegraphics[width=.9\textwidth]{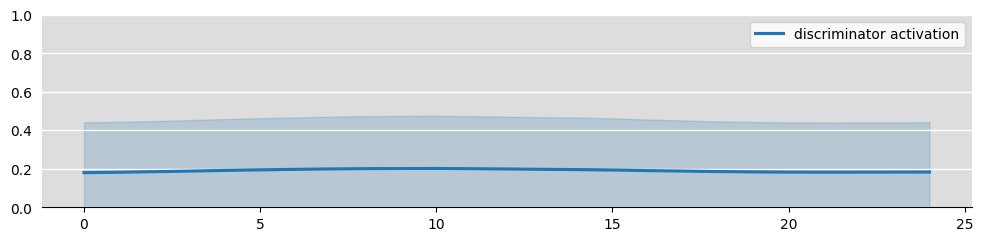}
    \caption{Sphere geodesic traversal on CelebA with a multivariate normal prior.}
    \label{fig:trav:geo:normal}
\end{figure}
\begin{figure}[ht]
    \centering
    \includegraphics[width=.9\textwidth]{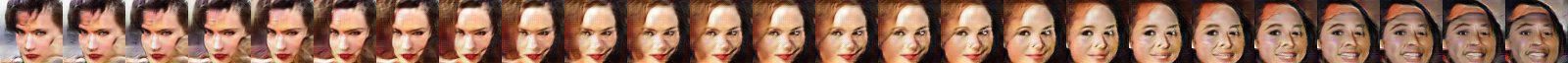}
    \includegraphics[width=.9\textwidth]{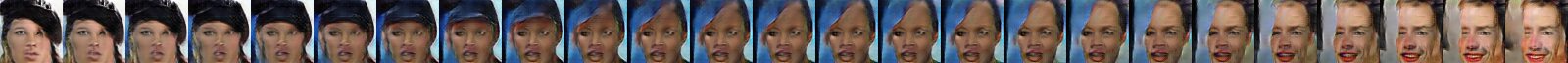}
    \includegraphics[width=.9\textwidth]{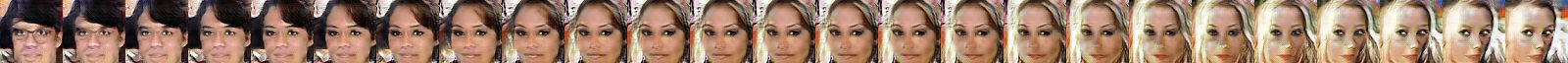}
    \includegraphics[width=.9\textwidth]{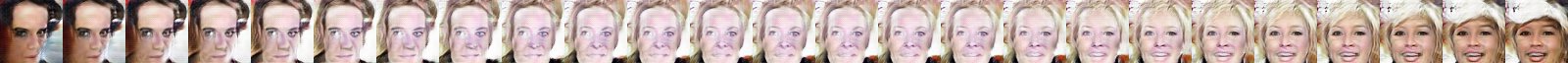}
    \includegraphics[width=.9\textwidth]{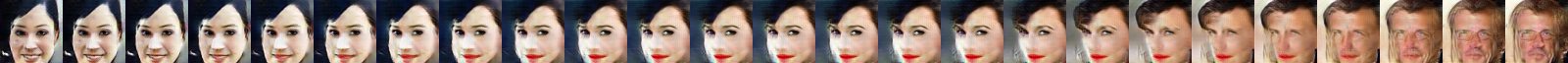}
    \includegraphics[width=.9\textwidth]{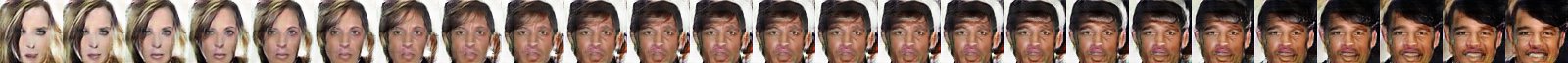}
    \includegraphics[width=.9\textwidth]{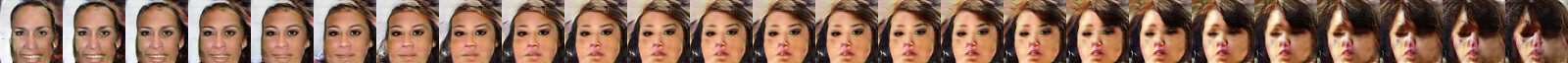}
    \includegraphics[width=.9\textwidth]{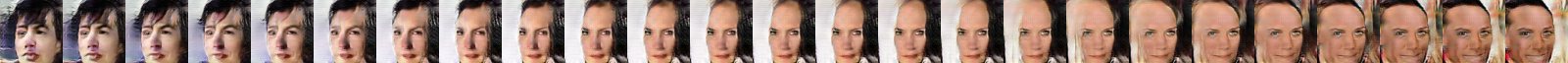}
    \includegraphics[width=.9\textwidth]{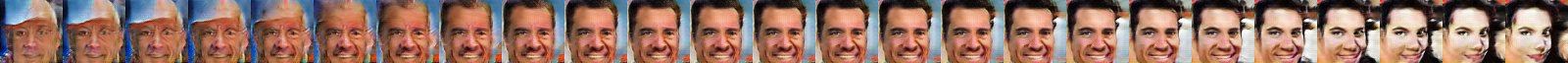}
    \includegraphics[width=.9\textwidth]{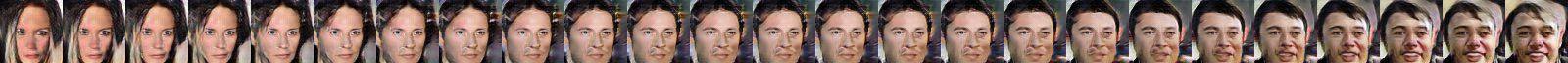}
    \includegraphics[width=.9\textwidth]{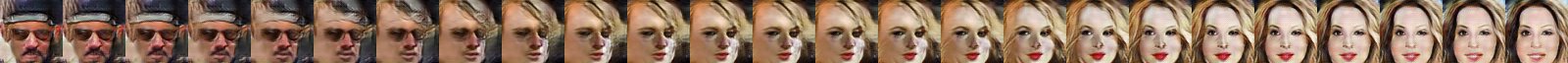}
    \includegraphics[width=.9\textwidth]{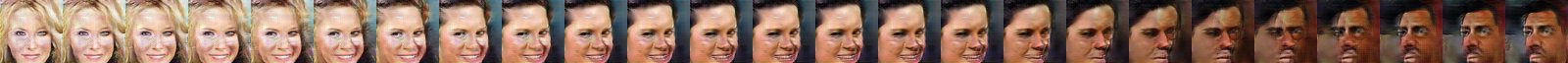}
    \includegraphics[width=.9\textwidth]{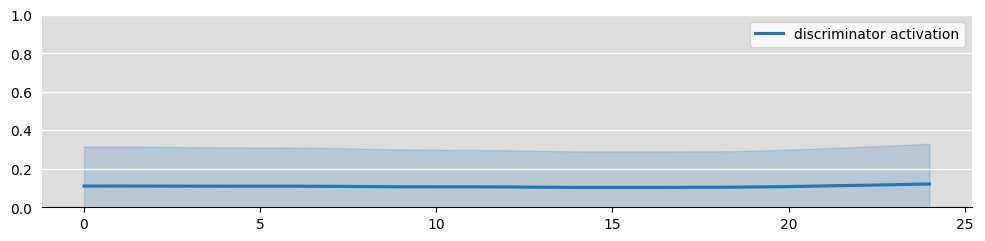}
    \caption{Sphere geodesic traversal on CelebA with a gamma prior.}
    \label{fig:trav:geo:gamma}
\end{figure}

\FloatBarrier
\subsubsection{Straight Euclidean Traversal}

Figures~\ref{fig:trav:euc:normal} and~\ref{fig:trav:euc:gamma} show traversals and discriminator activation along a straight line between the two endpoints.
For the normal prior, this results in garbage, since we pass through latent space that the GAN has never seen during training.
Another indication of this deficiency is the fact that the discriminator activation goes down drastically around the mid-point of the traversal.

For the gamma prior, however, the straight traversal results in a smooth interpolation between the endpoints.
Note that these traversals are much more semantic in nature, with the samples along the path really lying \emph{in between} the endpoints.
Also note the emergence of a \emph{mean sample} when looking at the mid-points.
In the case of faces, these mid-points, which are points closest to the origin, tend to be very common looking faces, looking straight ahead and having little uncommon features.
This emergence is even more pronounced in the traversals on the LSUN datasets in the Appendix.

\begin{figure}[ht]
    \centering
    \includegraphics[width=.9\textwidth]{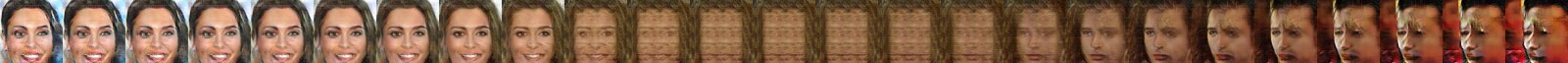}
    \includegraphics[width=.9\textwidth]{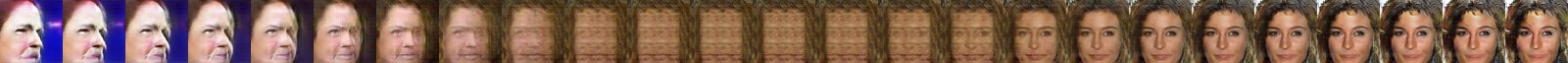}
    \includegraphics[width=.9\textwidth]{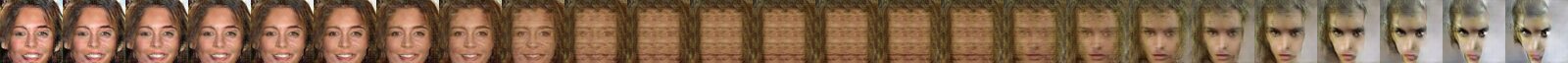}
    \includegraphics[width=.9\textwidth]{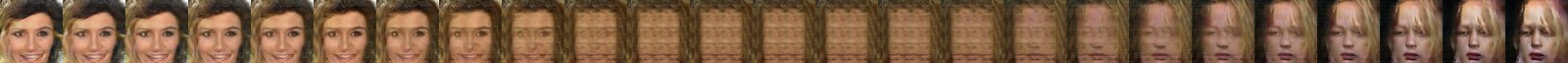}
    \includegraphics[width=.9\textwidth]{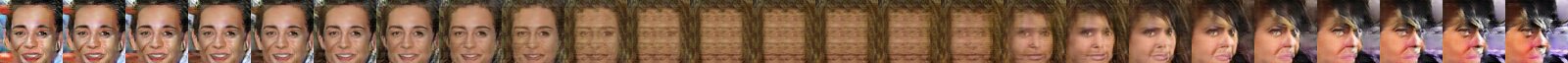}
    \includegraphics[width=.9\textwidth]{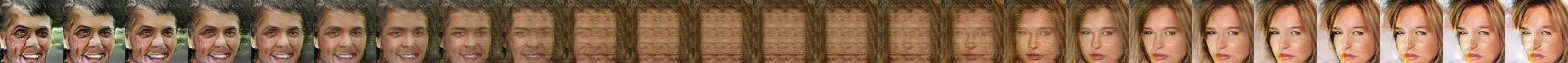}
    \includegraphics[width=.9\textwidth]{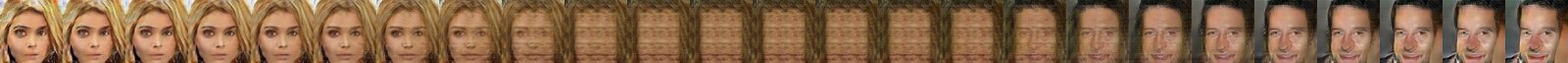}
    \includegraphics[width=.9\textwidth]{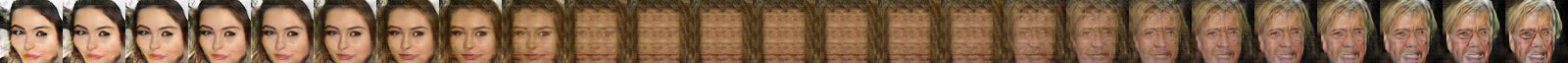}
    \includegraphics[width=.9\textwidth]{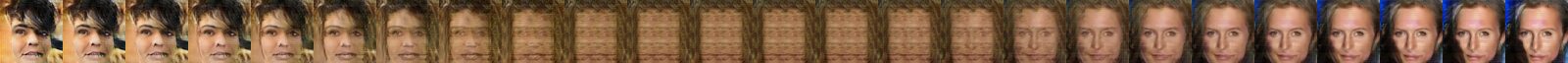}
    \includegraphics[width=.9\textwidth]{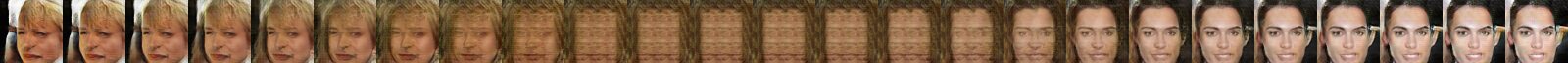}
    \includegraphics[width=.9\textwidth]{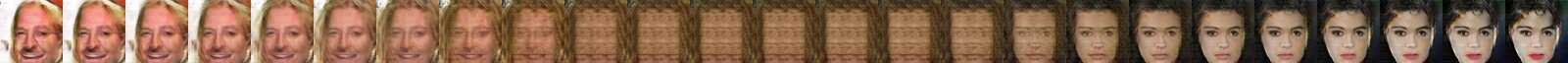}
    \includegraphics[width=.9\textwidth]{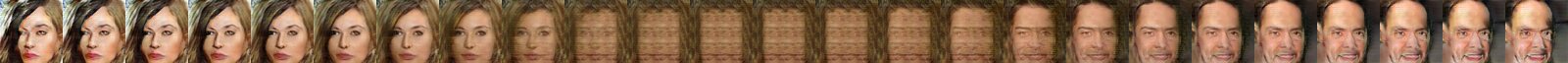}
    \includegraphics[width=.9\textwidth]{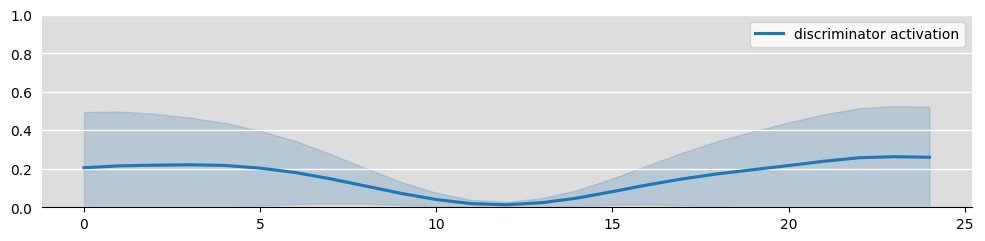}
    \caption{Straight Euclidean traversal on CelebA with a multivariate normal prior.}
    \label{fig:trav:euc:normal}
\end{figure}
\begin{figure}[ht]
    \centering
    \includegraphics[width=.9\textwidth]{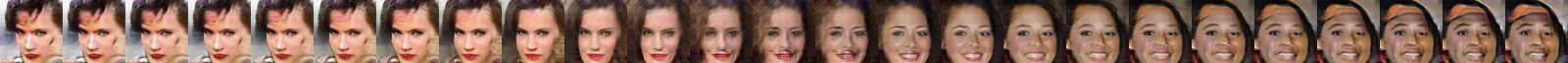}
    \includegraphics[width=.9\textwidth]{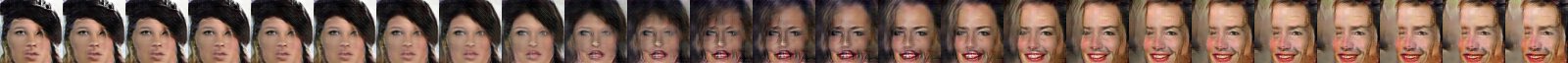}
    \includegraphics[width=.9\textwidth]{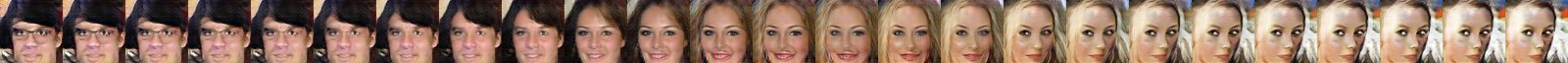}
    \includegraphics[width=.9\textwidth]{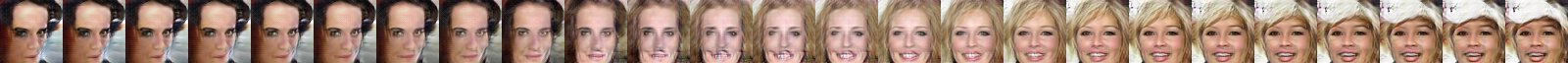}
    \includegraphics[width=.9\textwidth]{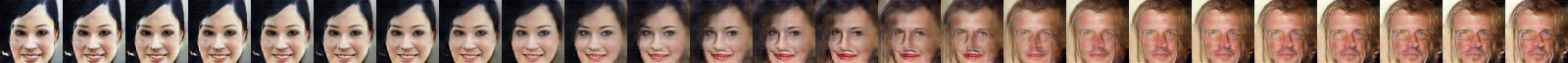}
    \includegraphics[width=.9\textwidth]{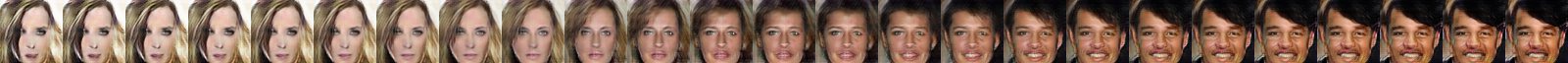}
    \includegraphics[width=.9\textwidth]{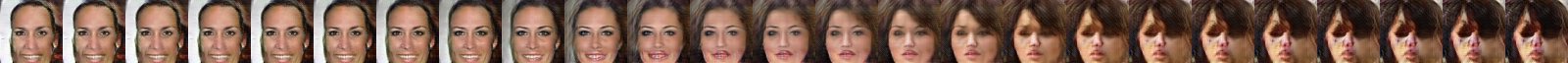}
    \includegraphics[width=.9\textwidth]{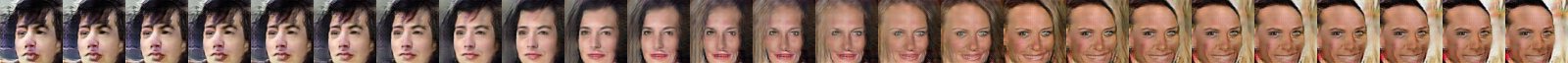}
    \includegraphics[width=.9\textwidth]{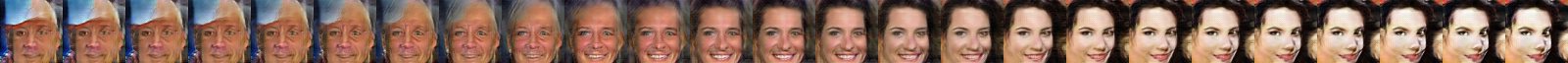}
    \includegraphics[width=.9\textwidth]{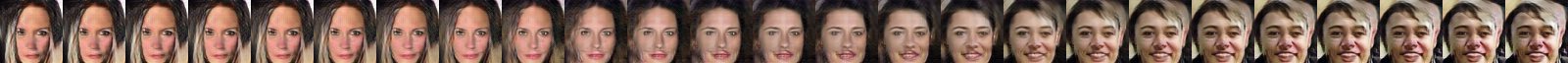}
    \includegraphics[width=.9\textwidth]{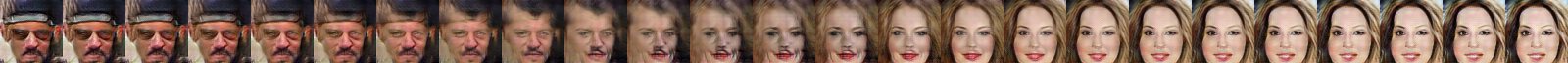}
    \includegraphics[width=.9\textwidth]{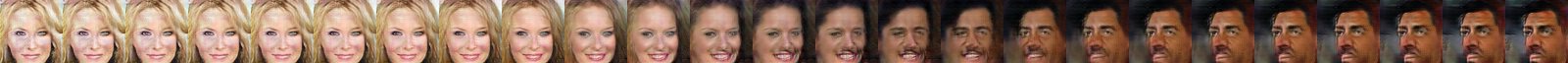}
    \includegraphics[width=.9\textwidth]{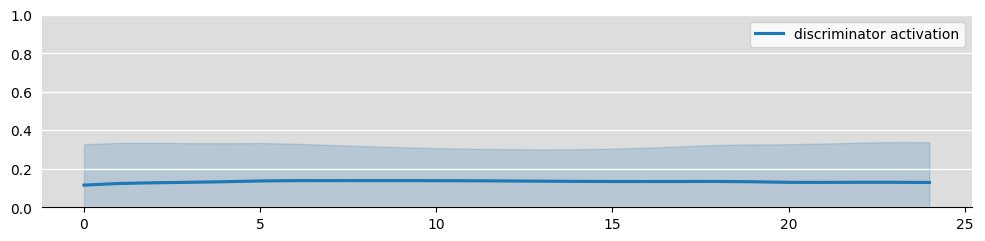}
    \caption{Straight Euclidean traversal on CelebA with a gamma prior.}
    \label{fig:trav:euc:gamma}
\end{figure}

\FloatBarrier
\subsection{Latent Mean Samples}

Since we noticed in our traversal experiments with gamma priors the interesting phenomenon that the mid-points of the sampled pairs of endpoints tend to converge to what one might call \emph{mean} samples, we took our trained models and specifically sampled points close to the coordinate origin in order to directly obtain such mean samples.
Figure~\ref{fig:mean} shows these mean samples for our different datasets.

\begin{figure}[ht]
    \centering
    \subfigure[CelebA]{
    \includegraphics[width=.45\textwidth]{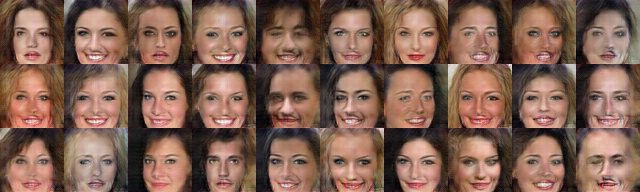}
    }
    \subfigure[LSUN kitchen]{
    \includegraphics[width=.45\textwidth]{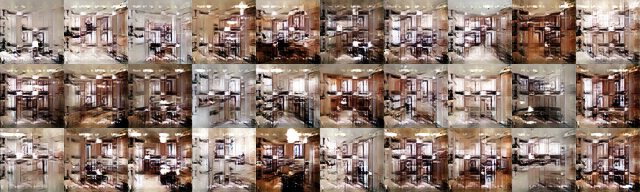}
    }
    \subfigure[LSUN bedroom]{
    \includegraphics[width=.45\textwidth]{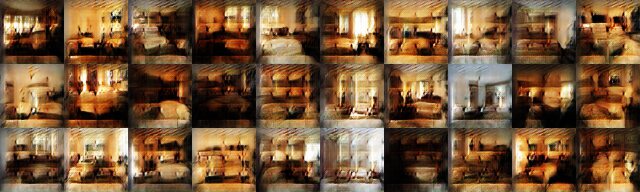}
    }
    \subfigure[MNIST]{
    \includegraphics[width=.45\textwidth]{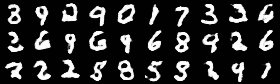}
    }
    \subfigure[SVHN]{
    \includegraphics[width=.45\textwidth]{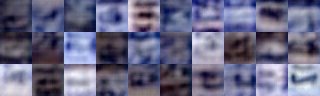}
    }
    \subfigure[CIFAR10]{
    \includegraphics[width=.45\textwidth]{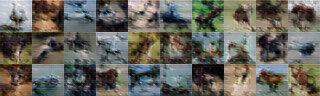}
    }
    \caption{Samples around the coordinate origin in latent space on GANs trained with a gamma prior.}
    \label{fig:mean}
\end{figure}

\clearpage
\FloatBarrier
\subsection{Effects of the Latent Space Dimensionality}

We here empirically test the validity of the theoretical predictions made about the effect of the latent space dimensions on latent space traversals. We trained GANs for multiple latent space dimensionalities and performed straight line latent traversals using the obtained generators.
Figure~\ref{fig:traversal:dimensions} shows the results achieved on the CelebA dataset where we observe that in low dimensions, generators trained with both the Normal and Gamma priors yield satisfying results. However, as one increases the dimensionality, generators trained using a Normal prior degrade quickly, while those trained using a Gamma prior remain unaffected.
These results are therefore in accordance with the theoretical predictions made earlier.
Further results on different datasets are contained in the Appendix.

\begin{figure}[ht]
    \centering
    \begin{tabular}{ll}
        \multicolumn{2}{l}{normal prior} \\
        {d=2} & 
    \includegraphics[width=.9\textwidth]{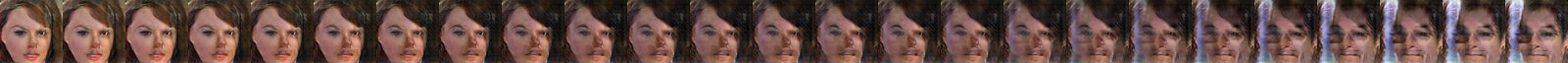} \\
        & \includegraphics[width=.9\textwidth]{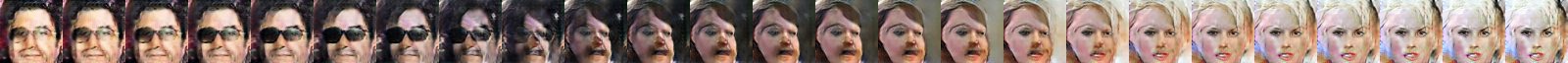} \\
        {d=3} &
    \includegraphics[width=.9\textwidth]{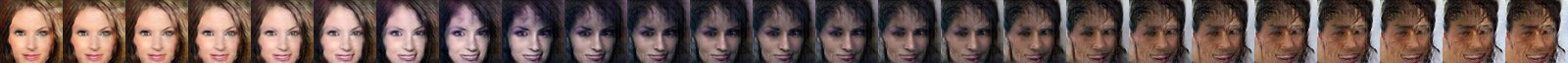} \\
        & \includegraphics[width=.9\textwidth]{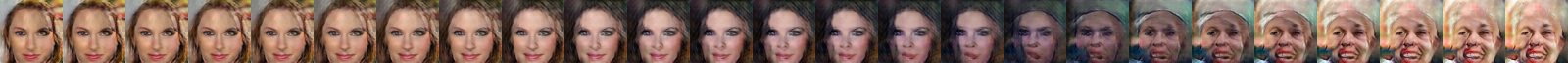} \\
    {d=5} &
    \includegraphics[width=.9\textwidth]{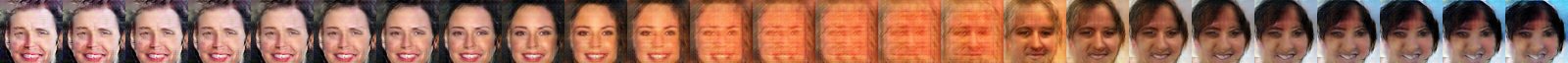} \\
        & \includegraphics[width=.9\textwidth]{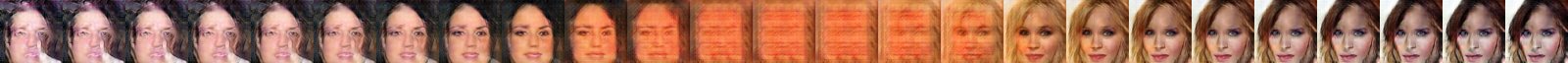} \\
    {d=10} &
    \includegraphics[width=.9\textwidth]{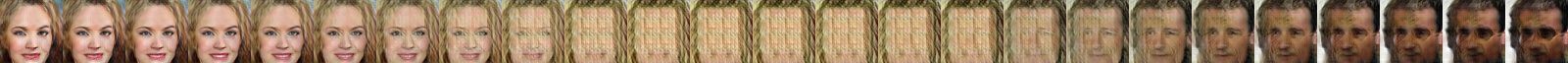} \\
        & \includegraphics[width=.9\textwidth]{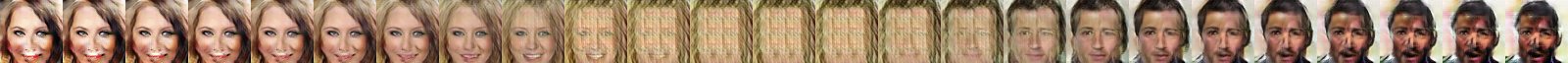} \\
    {d=50} &
    \includegraphics[width=.9\textwidth]{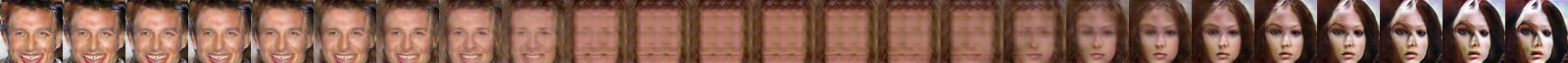} \\
        & \includegraphics[width=.9\textwidth]{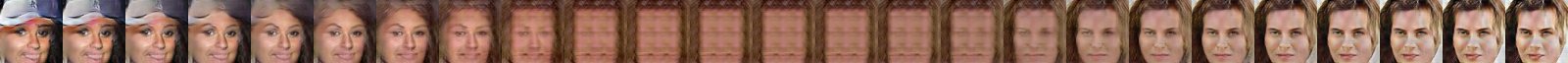} \\
    {d=200} &
    \includegraphics[width=.9\textwidth]{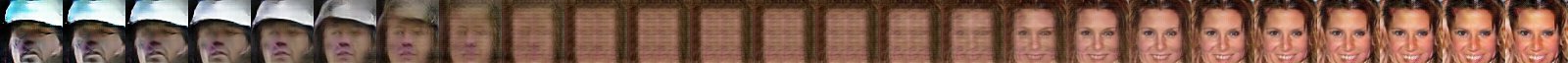} \\
        & \includegraphics[width=.9\textwidth]{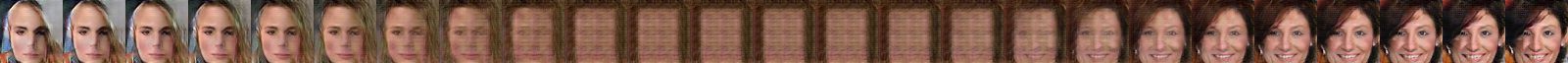} \\
        \multicolumn{2}{l}{gamma prior} \\
        {d=2} & 
    \includegraphics[width=.9\textwidth]{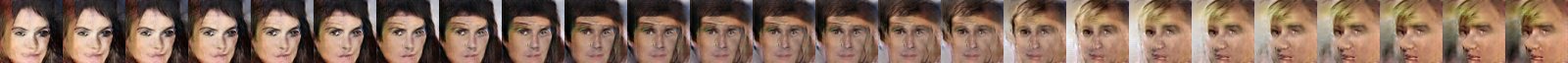} \\
        & \includegraphics[width=.9\textwidth]{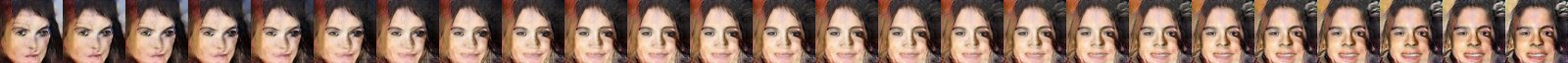} \\
        {d=3} &
    \includegraphics[width=.9\textwidth]{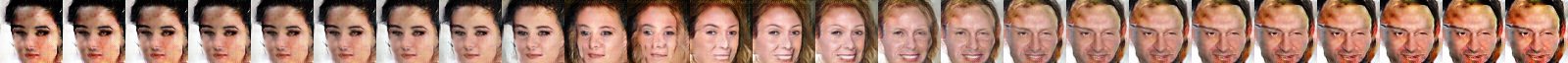} \\
        & \includegraphics[width=.9\textwidth]{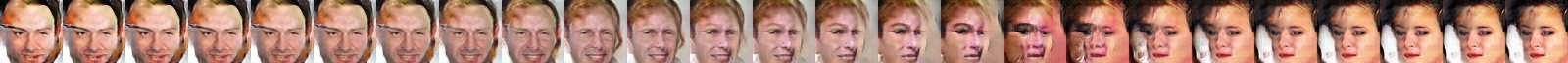} \\
    {d=5} &
    \includegraphics[width=.9\textwidth]{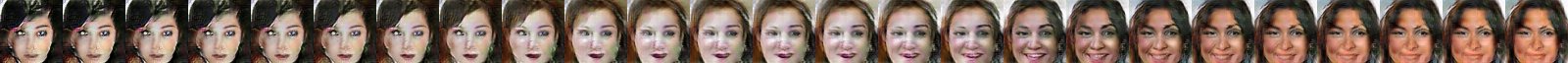} \\
        & \includegraphics[width=.9\textwidth]{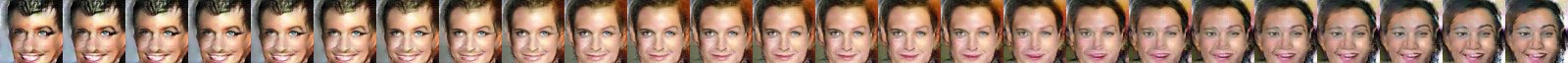} \\
    {d=10} &
    \includegraphics[width=.9\textwidth]{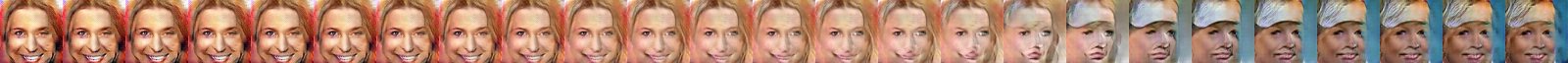} \\
        & \includegraphics[width=.9\textwidth]{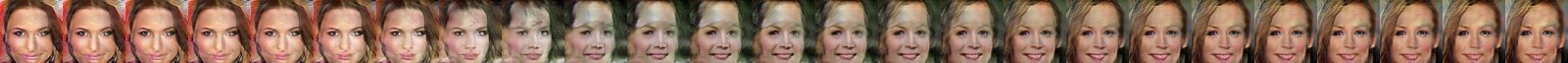} \\
    {d=50} &
    \includegraphics[width=.9\textwidth]{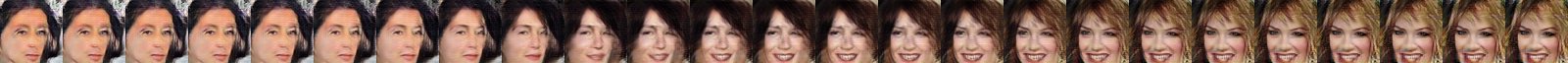} \\
        & \includegraphics[width=.9\textwidth]{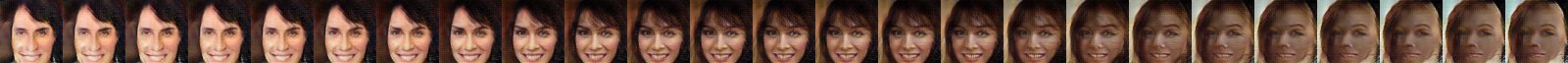} \\
    {d=200} &
    \includegraphics[width=.9\textwidth]{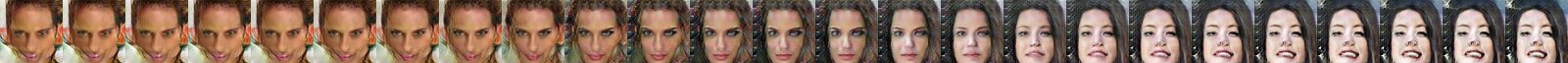} \\
        & \includegraphics[width=.9\textwidth]{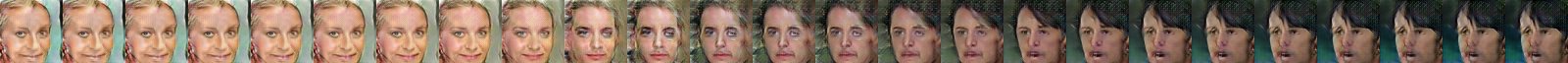}
    \end{tabular}
    \caption{Straight traversal in generators with different latent space dimensionality on CelebA with a normal (top) and gamma (bottom) prior.}
    \label{fig:traversal:dimensions}
\end{figure}

\FloatBarrier
\subsection{Algebra Experiments}
Since we've established that using our gamma priors results in the latent space becoming more Euclidean in nature, we can ask whether this also helps for another task people are often using to evaluate generative models.

We perform various analogy experiments such as the one described in~\cite{mikolov2013efficient} who demonstrated words vectors exhibit relationships such as "Paris - France + Italy = Rome". In order to perform such experiments, we use the CelebA dataset that provides multiple binary attribute labels for each sample.
Consider two attributes, $\mathcal{A}$ and $\mathcal{B}$. We denote by $[A,B]$ a pair of samples that have both attributes, by $[a,b]$ samples that have none of the two, and $[A,b], [a,B]$ samples that have only one of the attributes.

For each pair of attributes $\mathcal{A}, \mathcal{B}$, we want that
\begin{equation}
    \label{eqn:algebra}
    z( [A,B] ) - z( [a,B] ) + z( [a,b] ) \tilde{=} z( [A,b] )
\end{equation}
where $z([\bullet,\bullet])$ denotes the mean latent representation of a set of samples with (or without) the given attributes.

Using a pre-trained model, we sample a batch of samples from the generator from each of the four categories using a classifier to decide which category a sample belongs to. We then quantify to what degree the analogy described in Equation~\ref{eqn:algebra} holds using the following \emph{Latent Algebra Score (LAS)}:
\begin{equation}
    \label{eqn:las}
    \text{LAS} = \frac{2}{N(N-1)}\sum_{\mathcal{A}\neq\mathcal{B}} \frac{\|z([A,B]) - z([a,B]) + z([a,b]) - z([A,b])\|^2_2}{m_{\|z\|_2^2}}
\end{equation}
where $N$ is the number of binary attributes and $m_{\|z\|_2^2}$ is the mean squared norm of all used latent vectors. The results shown in Table~\ref{tbl:las:celeba64c:100} reveal that the gamma sampling procedure produces better analogies compared to the standard sampling with a normal prior.

\begin{table}[ht]
    \centering
    \clearpage{}\begin{tabular}{lr}
\toprule
Prior &       LAS \\
\midrule
normal &  0.007496 \\
gamma  &  0.005638 \\
\bottomrule
\end{tabular}
\clearpage{}
    \caption{Latent Algebra Score for CelebA (lower is better)}
    \label{tbl:las:celeba64c:100}
\end{table}
 
\section{Related Work}

Learned latent representations often allow for vector space arithmetic to translate to semantic operations in the data space~\cite{Radford:2015wf, larsen2015autoencoding}. Early observations showing that the latent space of a GAN finds semantic directions in the data space (e.g. corresponding to eyeglasses and smiles) were made in~\cite{Radford:2015wf}. Recent work has also focused on learning better similarity metrics~\cite{larsen2015autoencoding} or providing a finer semantic decomposition of the latent space~\cite{donahue2017semantically}. As a consequence, the evaluation of current GAN models is often done by sampling pair of points and linear interpolating between them in the latent space, or performing other types of noise vector arithmetic~\cite{bojanowski2017optimizing}. This results in sampling
the latent space from locations with very low probability mass. This observation was also made in~\cite{white2016sampling} who suggested replacing linear interpolation with spherical linear interpolation which prevents diverging from the model's prior distribution.

\section{Conclusion}

While the standard way of sampling latent vectors for GANs is based on using a Normal distribution over the latent space, we showed that it might produce samples that are not likely under the model distribution. We discussed how this procedure suffers from the curse of dimensionality and demonstrated how a simple alternative procedure solves this problem. Finally, we provided an extensive set of experiments that clearly demonstrate visual improvements in the samples generated using our gamma sampling procedure.

\bibliography{references}
\bibliographystyle{iclr2018_conference}

\newpage
\appendix
\label{section:app}

\section{More Traversal Experiments}
\subsection{Straight Euclidean Traversal}

\begin{figure}[ht]
    \centering
    \includegraphics[width=\textwidth]{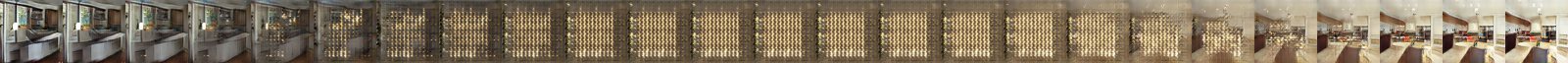}
    \includegraphics[width=\textwidth]{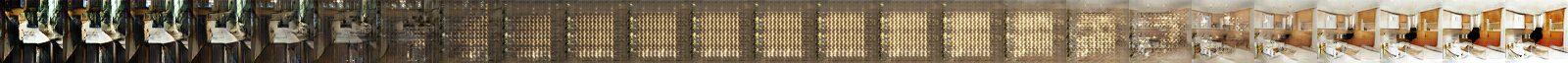}
    \includegraphics[width=\textwidth]{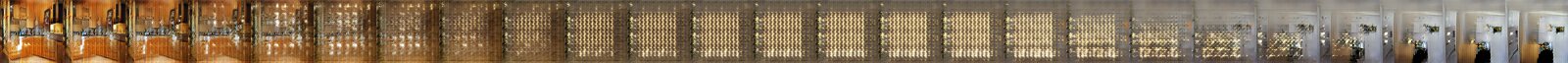}
    \includegraphics[width=\textwidth]{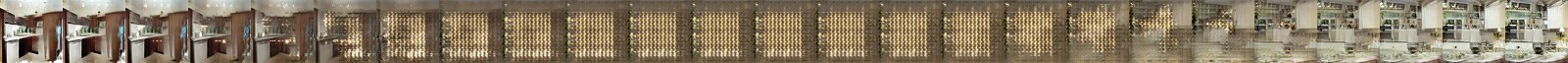}
    \includegraphics[width=\textwidth]{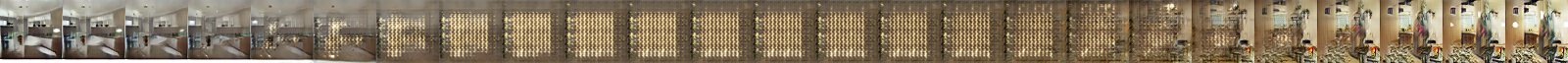}
    \includegraphics[width=\textwidth]{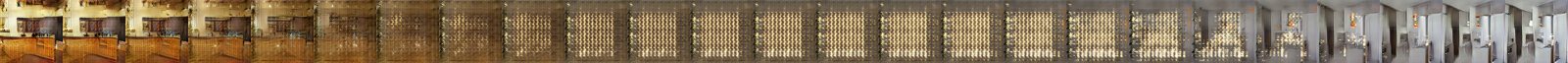}
    \includegraphics[width=\textwidth]{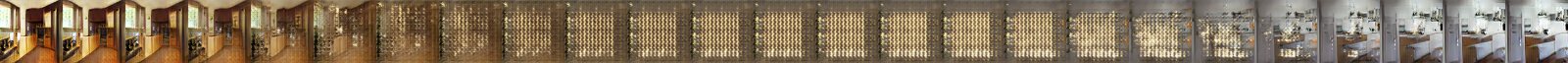}
    \includegraphics[width=\textwidth]{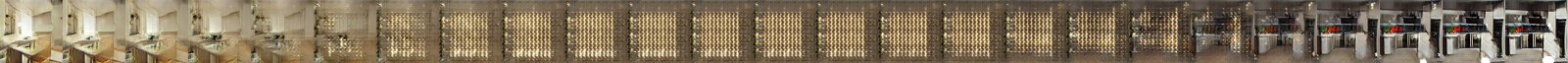}
    \includegraphics[width=\textwidth]{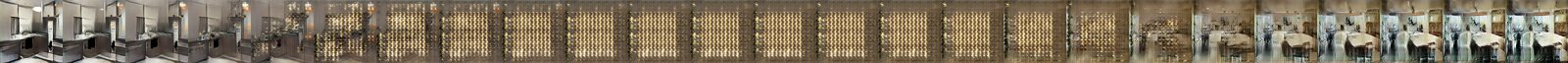}
    \includegraphics[width=\textwidth]{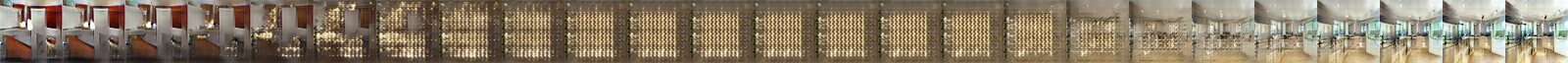}
    \includegraphics[width=\textwidth]{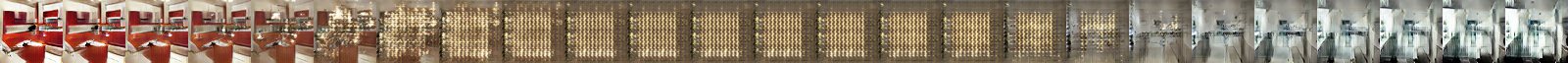}
    \includegraphics[width=\textwidth]{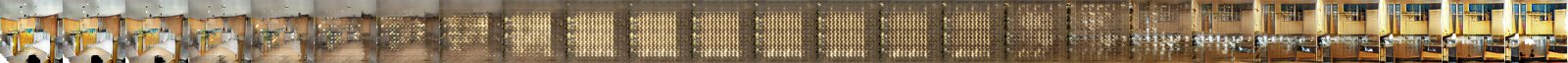}
    ~\\
    \includegraphics[width=\textwidth]{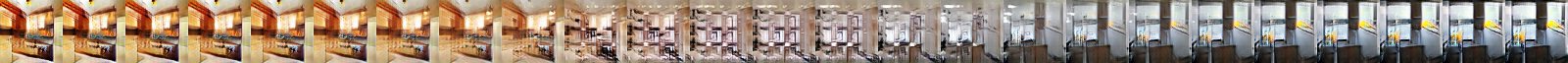}
    \includegraphics[width=\textwidth]{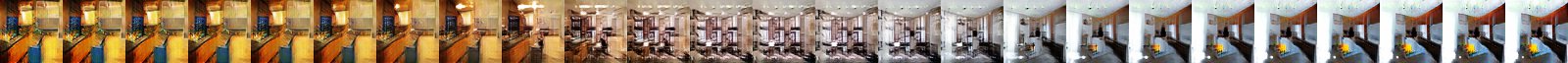}
    \includegraphics[width=\textwidth]{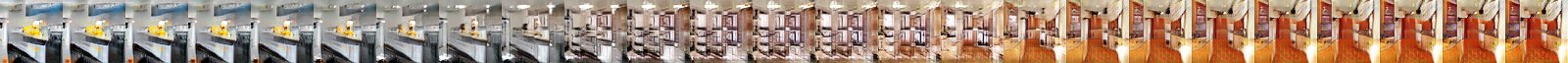}
    \includegraphics[width=\textwidth]{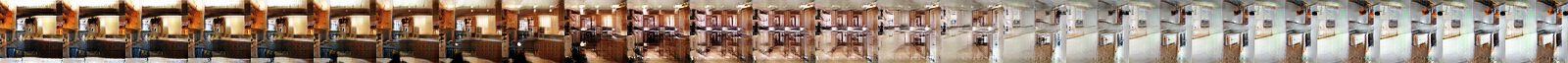}
    \includegraphics[width=\textwidth]{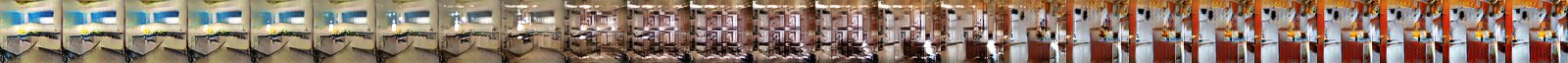}
    \includegraphics[width=\textwidth]{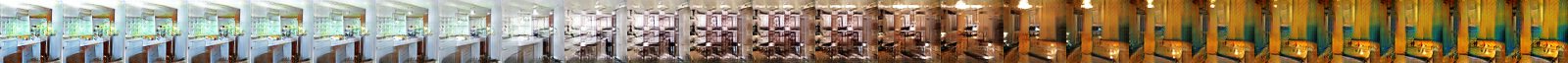}
    \includegraphics[width=\textwidth]{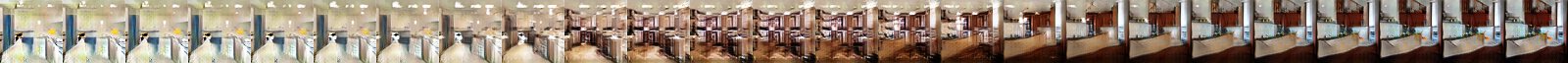}
    \includegraphics[width=\textwidth]{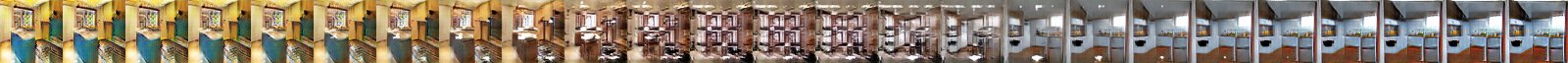}
    \includegraphics[width=\textwidth]{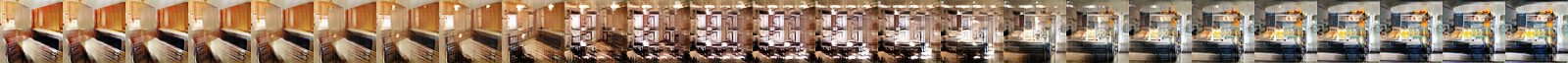}
    \includegraphics[width=\textwidth]{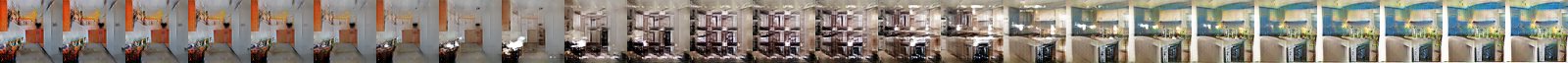}
    \includegraphics[width=\textwidth]{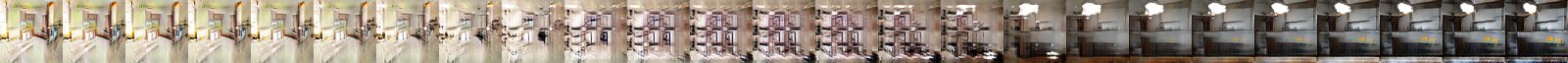}
    \includegraphics[width=\textwidth]{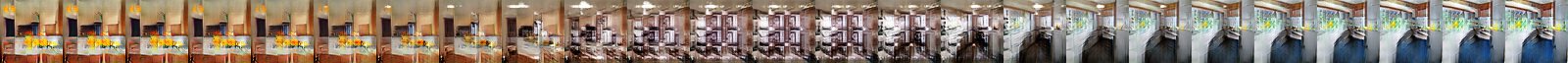}
    \caption{Straight Euclidean traversal on LSUN kitchen with a normal (top) and gamma (bottom) prior.}
\end{figure}

\begin{figure}[ht]
    \centering
    \includegraphics[width=\textwidth]{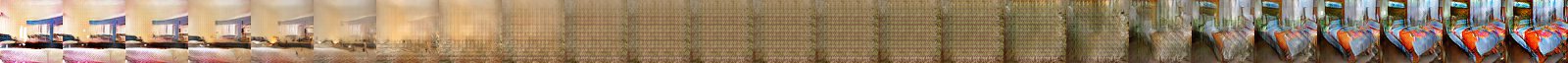}
    \includegraphics[width=\textwidth]{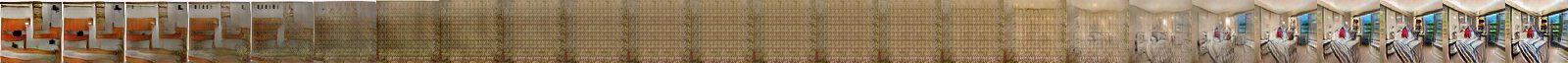}
    \includegraphics[width=\textwidth]{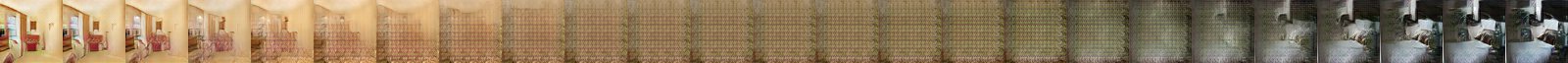}
    \includegraphics[width=\textwidth]{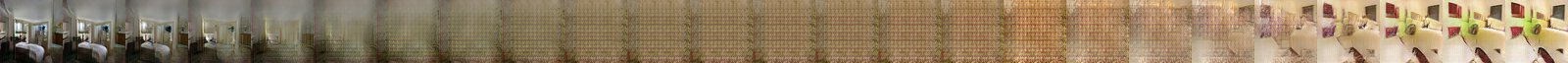}
    \includegraphics[width=\textwidth]{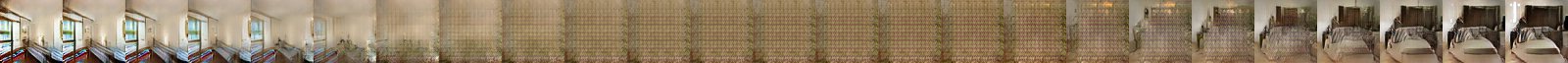}
    \includegraphics[width=\textwidth]{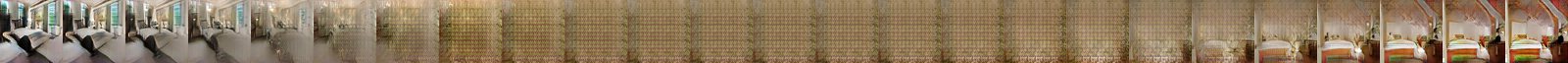}
    \includegraphics[width=\textwidth]{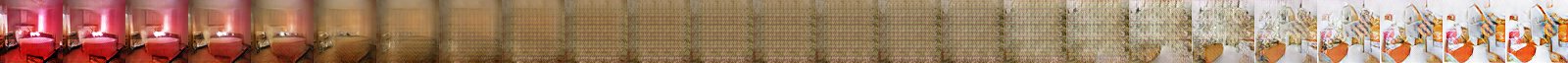}
    \includegraphics[width=\textwidth]{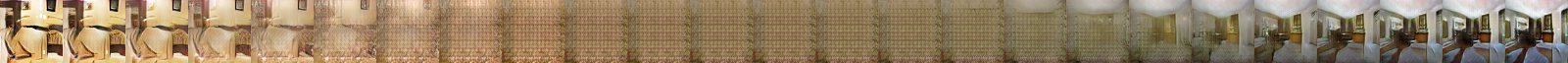}
    \includegraphics[width=\textwidth]{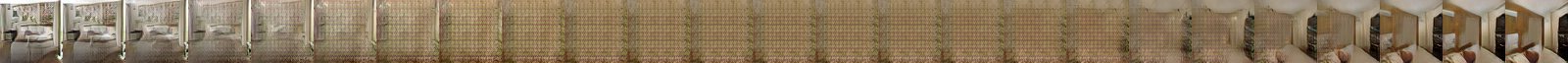}
    \includegraphics[width=\textwidth]{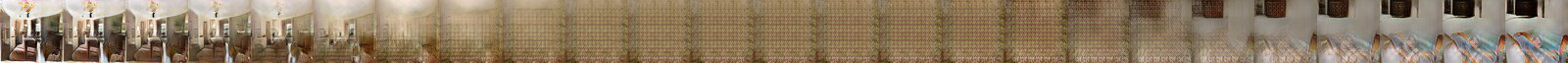}
    \includegraphics[width=\textwidth]{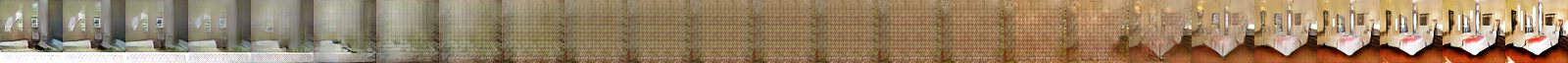}
    \includegraphics[width=\textwidth]{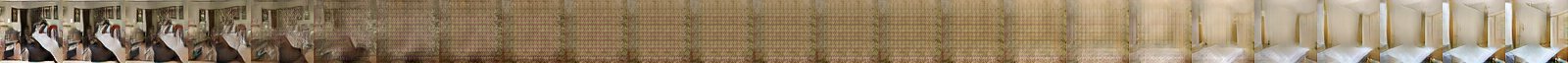}
    ~\\
    \includegraphics[width=\textwidth]{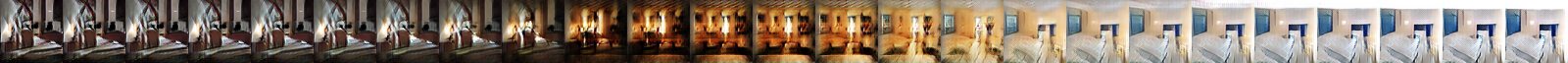}
    \includegraphics[width=\textwidth]{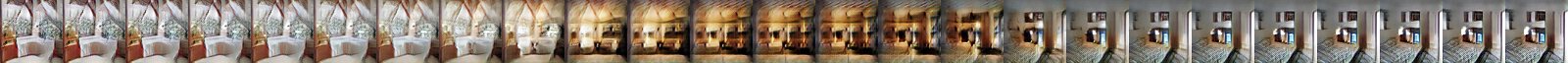}
    \includegraphics[width=\textwidth]{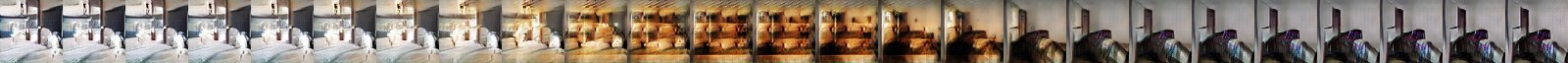}
    \includegraphics[width=\textwidth]{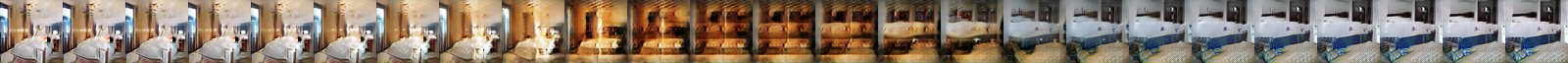}
    \includegraphics[width=\textwidth]{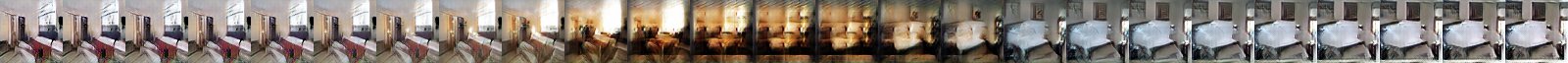}
    \includegraphics[width=\textwidth]{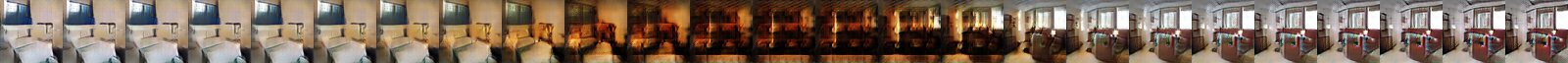}
    \includegraphics[width=\textwidth]{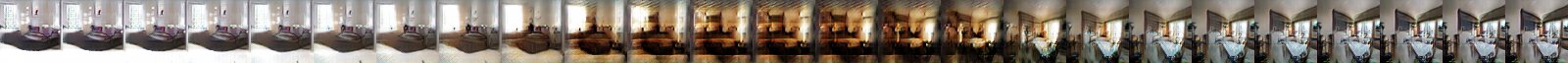}
    \includegraphics[width=\textwidth]{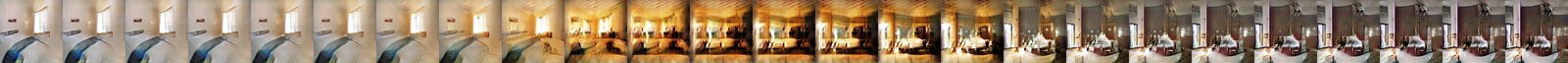}
    \includegraphics[width=\textwidth]{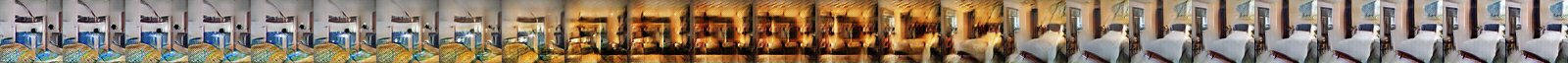}
    \includegraphics[width=\textwidth]{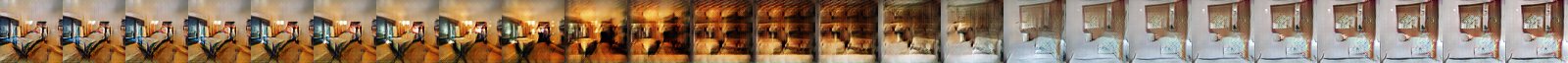}
    \includegraphics[width=\textwidth]{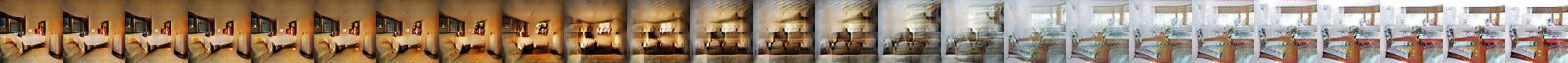}
    \includegraphics[width=\textwidth]{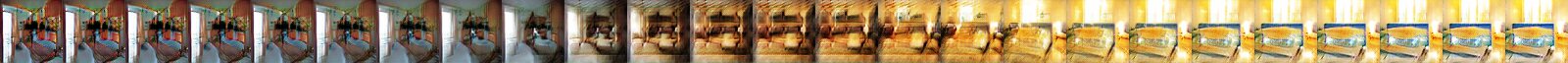}
    ~\\
    \caption{Straight Euclidean traversal on LSUN bedroom with a normal (top) and gamma (bottom) prior.}
\end{figure}

\begin{figure}[ht]
    \centering
    \includegraphics[width=\textwidth]{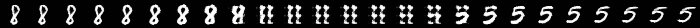}
    \includegraphics[width=\textwidth]{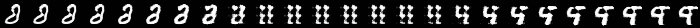}
    \includegraphics[width=\textwidth]{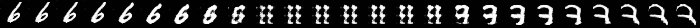}
    \includegraphics[width=\textwidth]{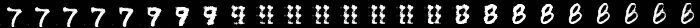}
    \includegraphics[width=\textwidth]{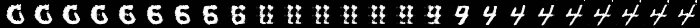}
    \includegraphics[width=\textwidth]{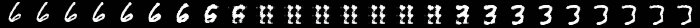}
    \includegraphics[width=\textwidth]{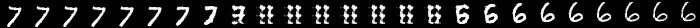}
    \includegraphics[width=\textwidth]{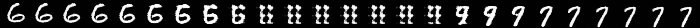}
    \includegraphics[width=\textwidth]{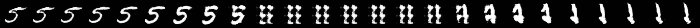}
    \includegraphics[width=\textwidth]{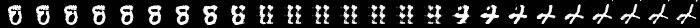}
    \includegraphics[width=\textwidth]{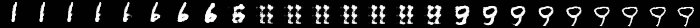}
    \includegraphics[width=\textwidth]{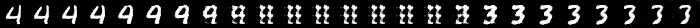}
    ~\\
    \includegraphics[width=\textwidth]{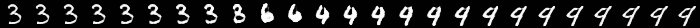}
    \includegraphics[width=\textwidth]{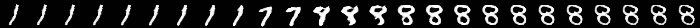}
    \includegraphics[width=\textwidth]{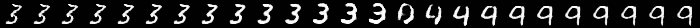}
    \includegraphics[width=\textwidth]{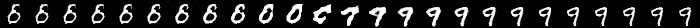}
    \includegraphics[width=\textwidth]{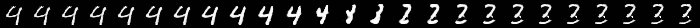}
    \includegraphics[width=\textwidth]{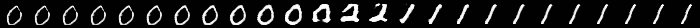}
    \includegraphics[width=\textwidth]{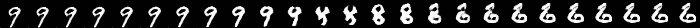}
    \includegraphics[width=\textwidth]{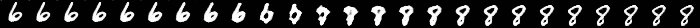}
    \includegraphics[width=\textwidth]{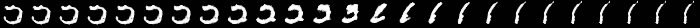}
    \includegraphics[width=\textwidth]{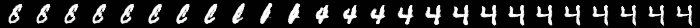}
    \includegraphics[width=\textwidth]{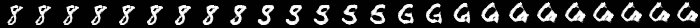}
    \includegraphics[width=\textwidth]{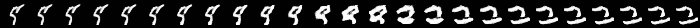}
    ~\\
    \caption{Straight Euclidean traversal on MNIST with a normal (top) and gamma (bottom) prior.}
\end{figure}

\begin{figure}[ht]
    \centering
    \includegraphics[width=\textwidth]{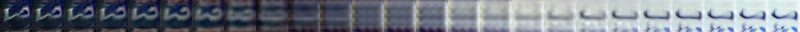}
    \includegraphics[width=\textwidth]{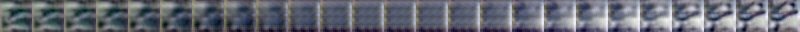}
    \includegraphics[width=\textwidth]{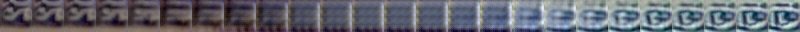}
    \includegraphics[width=\textwidth]{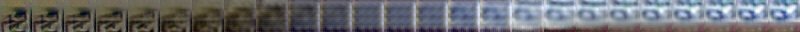}
    \includegraphics[width=\textwidth]{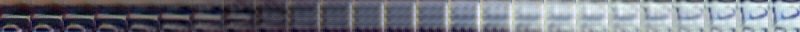}
    \includegraphics[width=\textwidth]{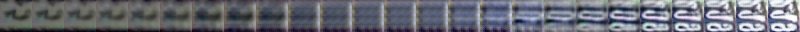}
    \includegraphics[width=\textwidth]{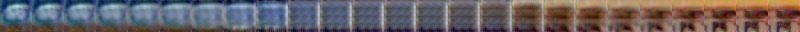}
    \includegraphics[width=\textwidth]{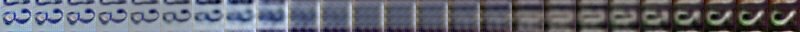}
    \includegraphics[width=\textwidth]{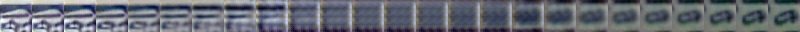}
    \includegraphics[width=\textwidth]{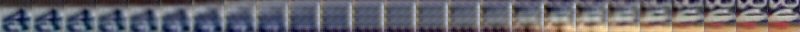}
    \includegraphics[width=\textwidth]{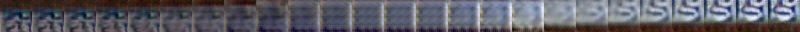}
    \includegraphics[width=\textwidth]{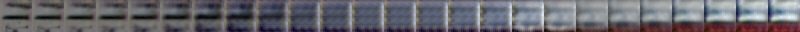}
    ~\\
    \includegraphics[width=\textwidth]{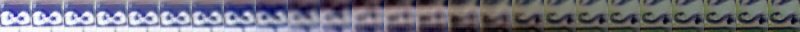}
    \includegraphics[width=\textwidth]{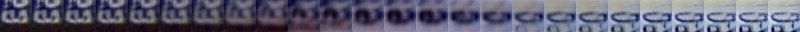}
    \includegraphics[width=\textwidth]{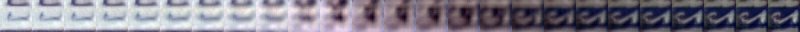}
    \includegraphics[width=\textwidth]{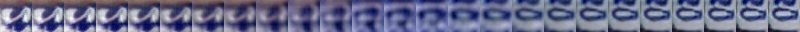}
    \includegraphics[width=\textwidth]{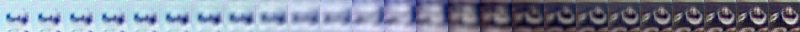}
    \includegraphics[width=\textwidth]{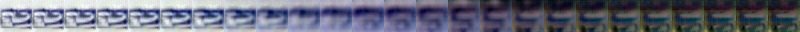}
    \includegraphics[width=\textwidth]{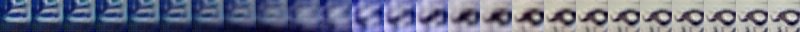}
    \includegraphics[width=\textwidth]{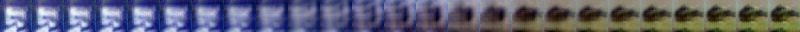}
    \includegraphics[width=\textwidth]{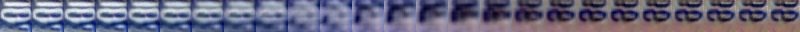}
    \includegraphics[width=\textwidth]{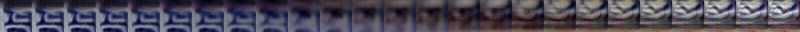}
    \includegraphics[width=\textwidth]{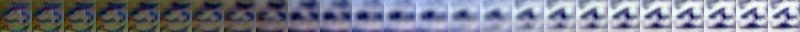}
    \includegraphics[width=\textwidth]{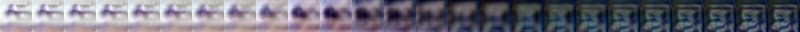}
    ~\\
    \caption{Straight Euclidean traversal on SVHN with a normal (top) and gamma (bottom) prior.}
\end{figure}

\begin{figure}[ht]
    \centering
    \includegraphics[width=\textwidth]{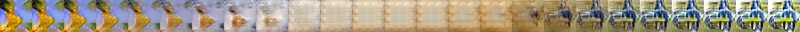}
    \includegraphics[width=\textwidth]{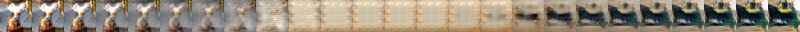}
    \includegraphics[width=\textwidth]{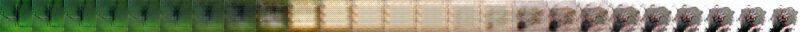}
    \includegraphics[width=\textwidth]{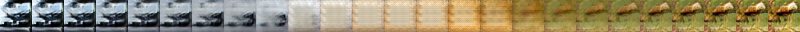}
    \includegraphics[width=\textwidth]{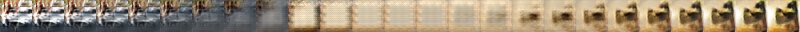}
    \includegraphics[width=\textwidth]{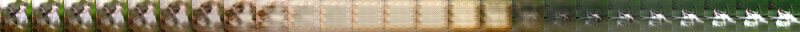}
    \includegraphics[width=\textwidth]{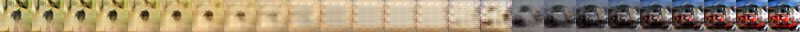}
    \includegraphics[width=\textwidth]{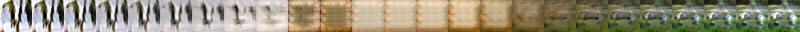}
    \includegraphics[width=\textwidth]{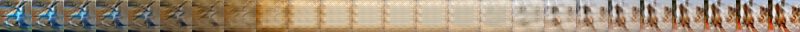}
    \includegraphics[width=\textwidth]{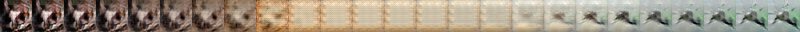}
    \includegraphics[width=\textwidth]{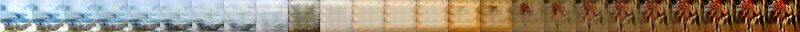}
    \includegraphics[width=\textwidth]{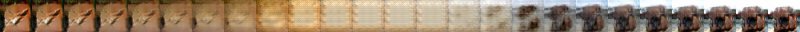}
    ~\\
    \includegraphics[width=\textwidth]{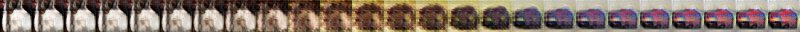}
    \includegraphics[width=\textwidth]{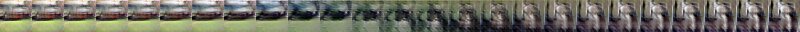}
    \includegraphics[width=\textwidth]{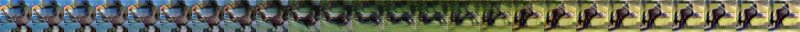}
    \includegraphics[width=\textwidth]{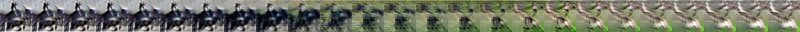}
    \includegraphics[width=\textwidth]{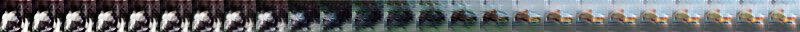}
    \includegraphics[width=\textwidth]{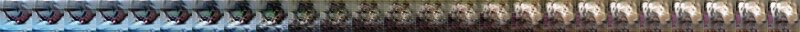}
    \includegraphics[width=\textwidth]{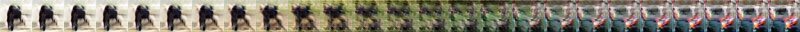}
    \includegraphics[width=\textwidth]{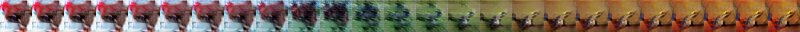}
    \includegraphics[width=\textwidth]{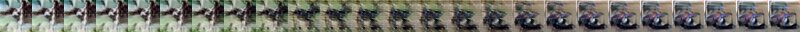}
    \includegraphics[width=\textwidth]{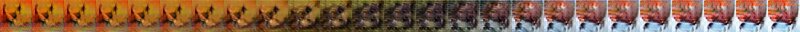}
    \includegraphics[width=\textwidth]{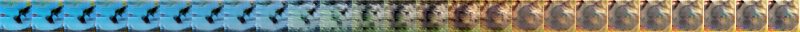}
    \includegraphics[width=\textwidth]{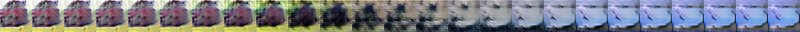}
    ~\\
    \caption{Straight Euclidean traversal on CIFAR10 with a normal (top) and gamma (bottom) prior.}
\end{figure}

\FloatBarrier
\subsection{Sphere Geodesic Traversal}

\begin{figure}[ht]
    \centering
    \includegraphics[width=\textwidth]{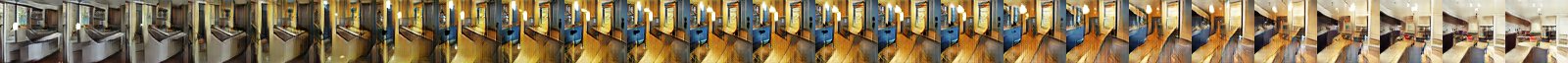}
    \includegraphics[width=\textwidth]{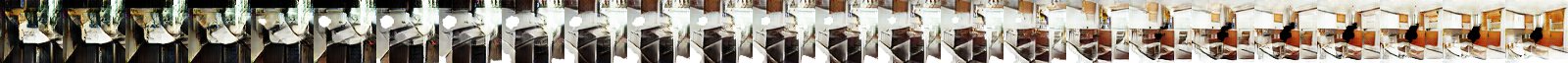}
    \includegraphics[width=\textwidth]{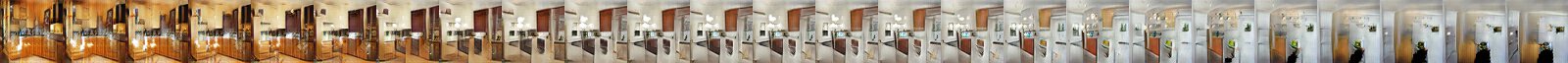}
    \includegraphics[width=\textwidth]{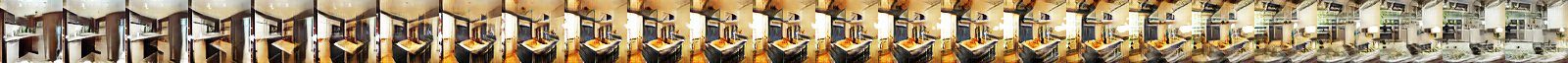}
    \includegraphics[width=\textwidth]{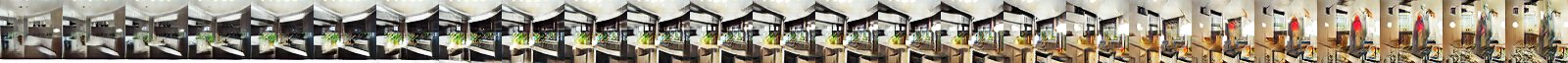}
    \includegraphics[width=\textwidth]{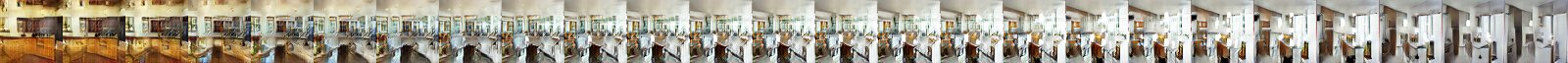}
    \includegraphics[width=\textwidth]{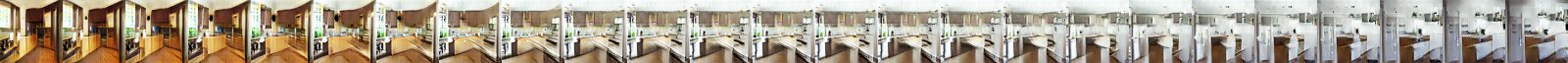}
    \includegraphics[width=\textwidth]{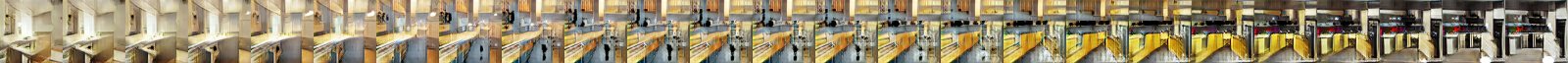}
    \includegraphics[width=\textwidth]{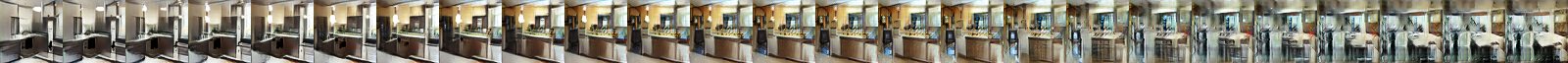}
    \includegraphics[width=\textwidth]{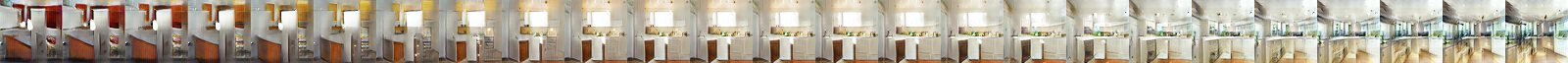}
    \includegraphics[width=\textwidth]{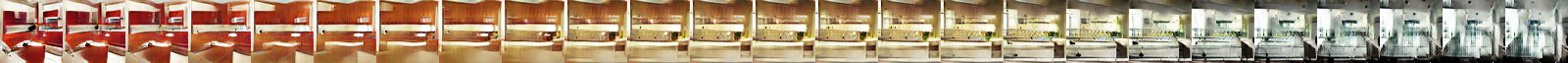}
    \includegraphics[width=\textwidth]{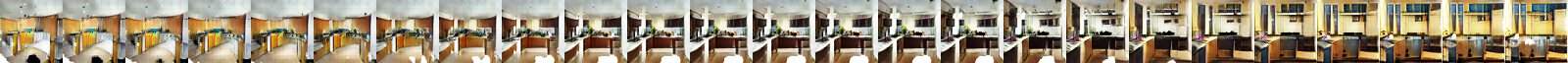}
    ~\\
    \includegraphics[width=\textwidth]{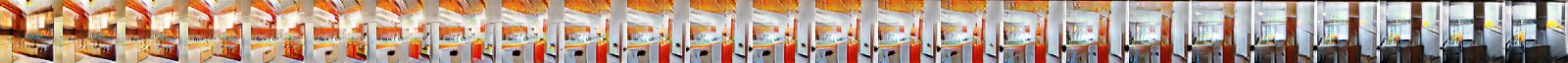}
    \includegraphics[width=\textwidth]{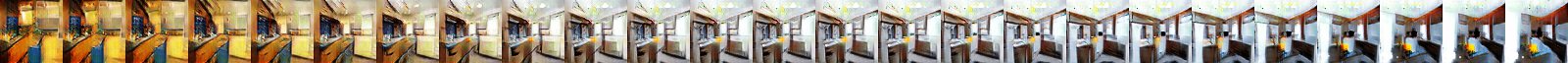}
    \includegraphics[width=\textwidth]{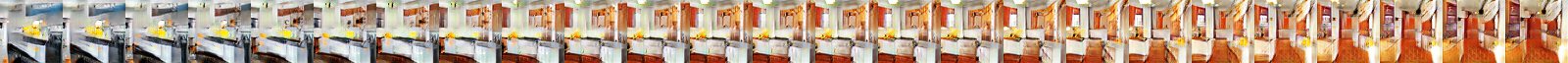}
    \includegraphics[width=\textwidth]{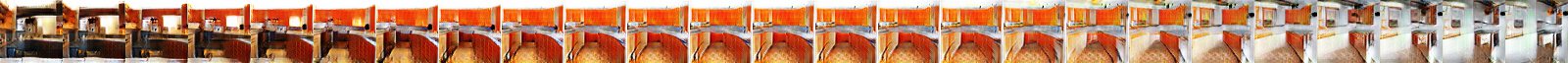}
    \includegraphics[width=\textwidth]{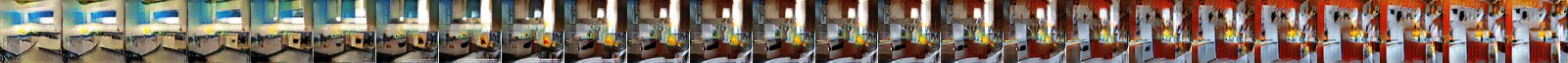}
    \includegraphics[width=\textwidth]{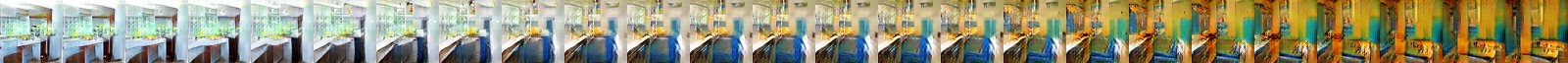}
    \includegraphics[width=\textwidth]{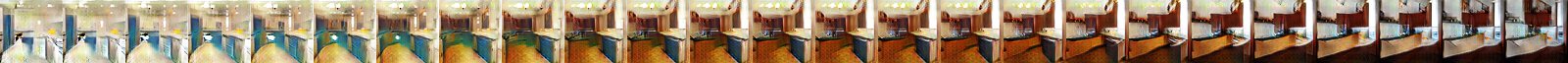}
    \includegraphics[width=\textwidth]{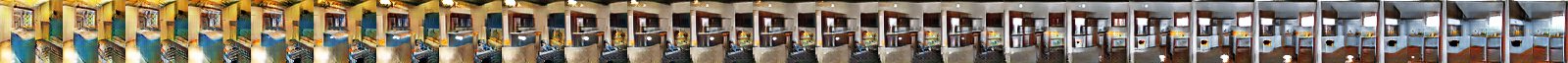}
    \includegraphics[width=\textwidth]{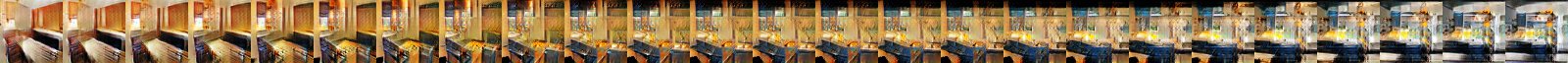}
    \includegraphics[width=\textwidth]{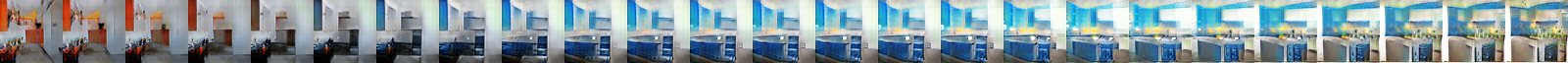}
    \includegraphics[width=\textwidth]{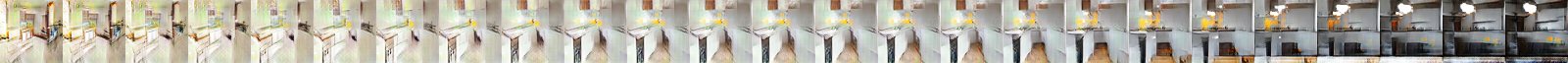}
    \includegraphics[width=\textwidth]{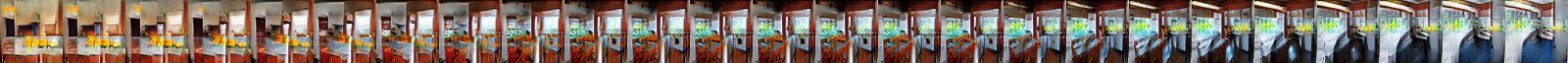}
    \caption{Sphere geodesic traversal on LSUN kitchen with a normal (top) and gamma (bottom) prior.}
\end{figure}

\begin{figure}[ht]
    \centering
    \includegraphics[width=\textwidth]{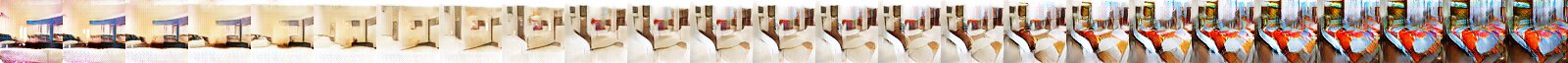}
    \includegraphics[width=\textwidth]{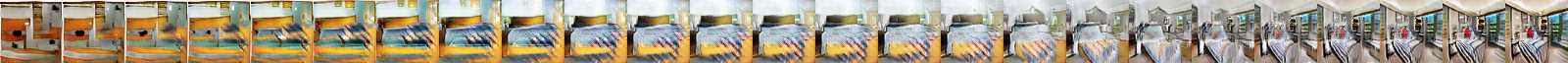}
    \includegraphics[width=\textwidth]{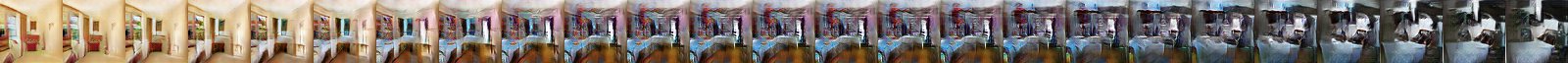}
    \includegraphics[width=\textwidth]{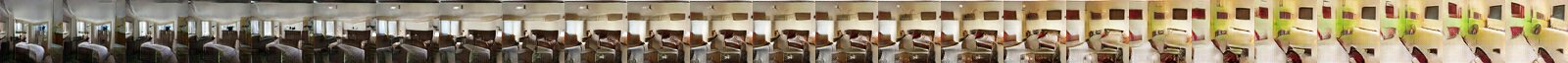}
    \includegraphics[width=\textwidth]{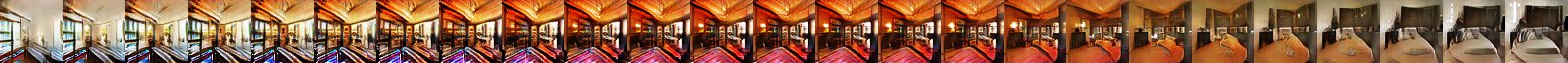}
    \includegraphics[width=\textwidth]{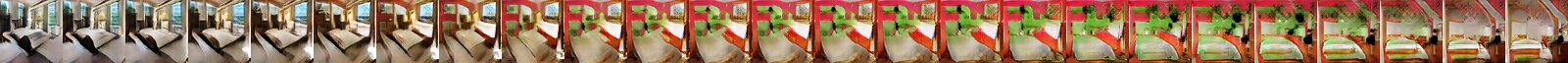}
    \includegraphics[width=\textwidth]{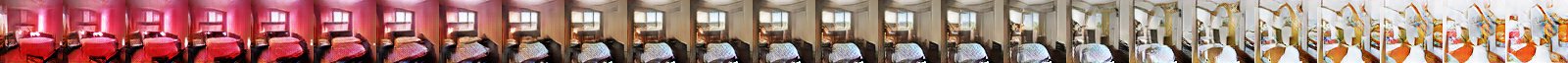}
    \includegraphics[width=\textwidth]{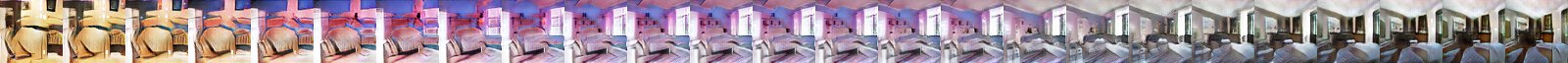}
    \includegraphics[width=\textwidth]{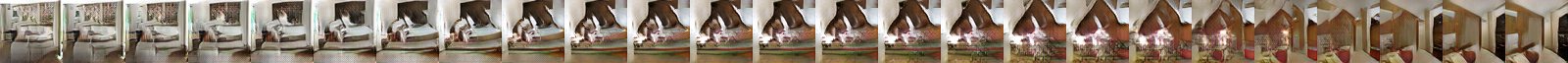}
    \includegraphics[width=\textwidth]{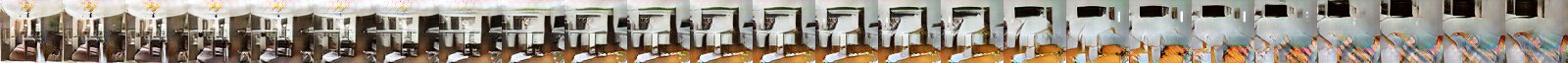}
    \includegraphics[width=\textwidth]{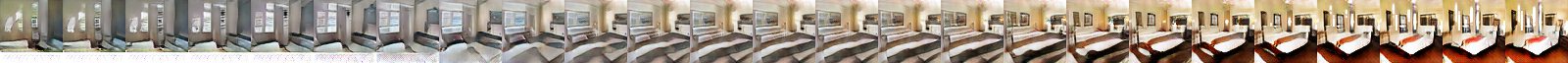}
    \includegraphics[width=\textwidth]{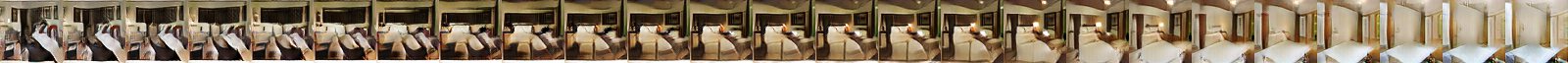}
    ~\\
    \includegraphics[width=\textwidth]{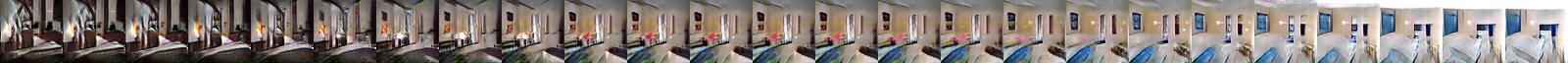}
    \includegraphics[width=\textwidth]{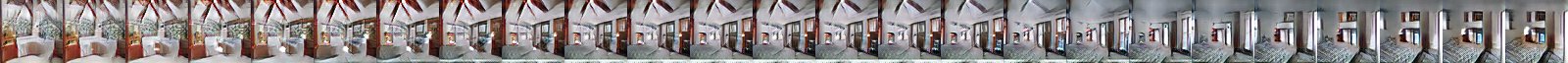}
    \includegraphics[width=\textwidth]{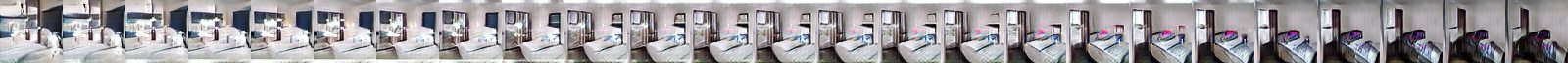}
    \includegraphics[width=\textwidth]{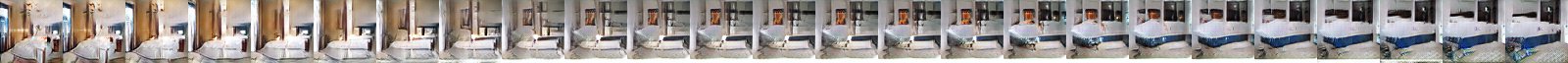}
    \includegraphics[width=\textwidth]{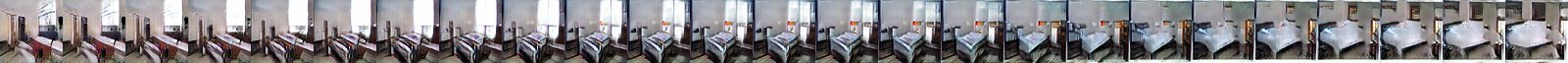}
    \includegraphics[width=\textwidth]{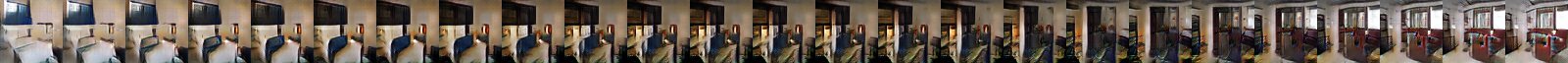}
    \includegraphics[width=\textwidth]{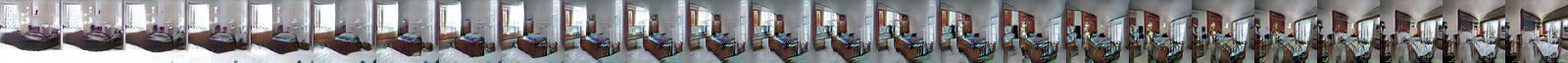}
    \includegraphics[width=\textwidth]{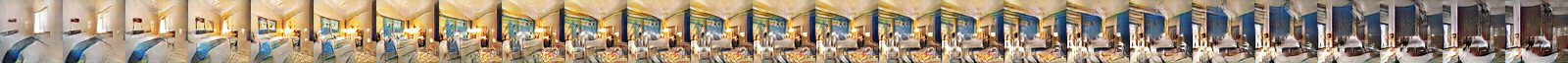}
    \includegraphics[width=\textwidth]{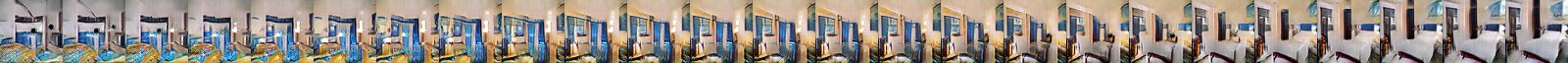}
    \includegraphics[width=\textwidth]{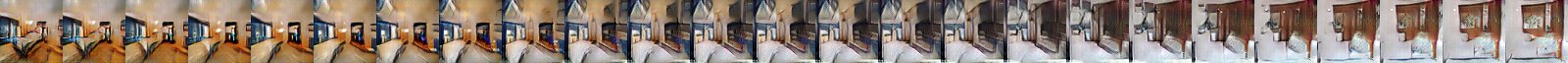}
    \includegraphics[width=\textwidth]{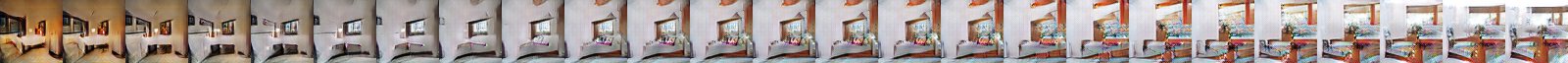}
    \includegraphics[width=\textwidth]{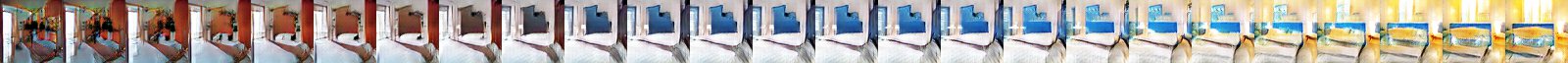}
    ~\\
    \caption{Sphere geodesic traversal on LSUN bedroom with a normal (top) and gamma (bottom) prior.}
\end{figure}

\begin{figure}[ht]
    \centering
    \includegraphics[width=\textwidth]{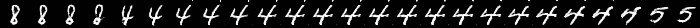}
    \includegraphics[width=\textwidth]{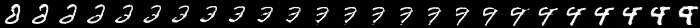}
    \includegraphics[width=\textwidth]{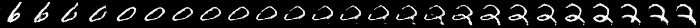}
    \includegraphics[width=\textwidth]{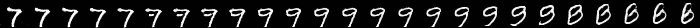}
    \includegraphics[width=\textwidth]{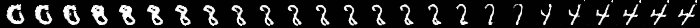}
    \includegraphics[width=\textwidth]{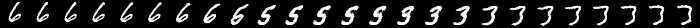}
    \includegraphics[width=\textwidth]{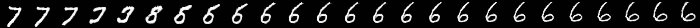}
    \includegraphics[width=\textwidth]{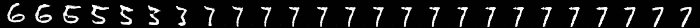}
    \includegraphics[width=\textwidth]{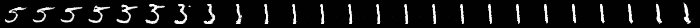}
    \includegraphics[width=\textwidth]{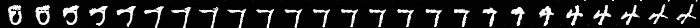}
    \includegraphics[width=\textwidth]{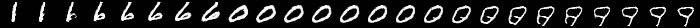}
    \includegraphics[width=\textwidth]{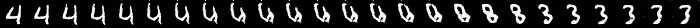}
    ~\\
    \includegraphics[width=\textwidth]{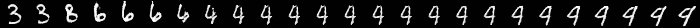}
    \includegraphics[width=\textwidth]{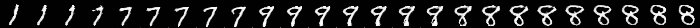}
    \includegraphics[width=\textwidth]{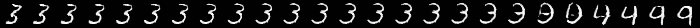}
    \includegraphics[width=\textwidth]{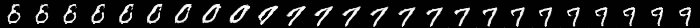}
    \includegraphics[width=\textwidth]{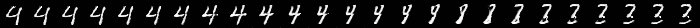}
    \includegraphics[width=\textwidth]{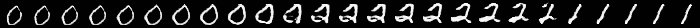}
    \includegraphics[width=\textwidth]{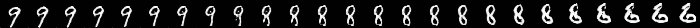}
    \includegraphics[width=\textwidth]{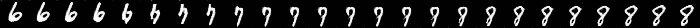}
    \includegraphics[width=\textwidth]{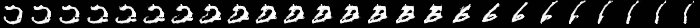}
    \includegraphics[width=\textwidth]{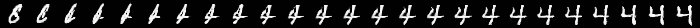}
    \includegraphics[width=\textwidth]{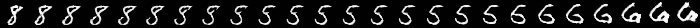}
    \includegraphics[width=\textwidth]{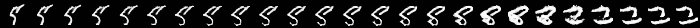}
    ~\\
    \caption{Sphere geodesic traversal on MNIST with a normal (top) and gamma (bottom) prior.}
\end{figure}

\begin{figure}[ht]
    \centering
    \includegraphics[width=\textwidth]{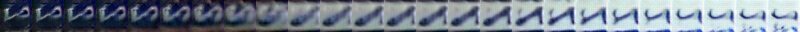}
    \includegraphics[width=\textwidth]{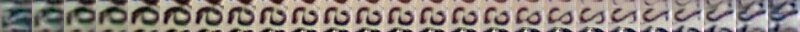}
    \includegraphics[width=\textwidth]{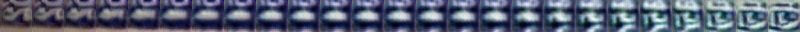}
    \includegraphics[width=\textwidth]{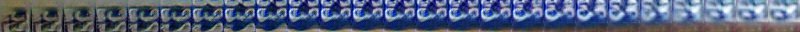}
    \includegraphics[width=\textwidth]{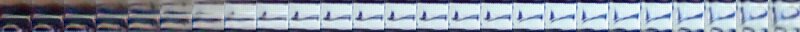}
    \includegraphics[width=\textwidth]{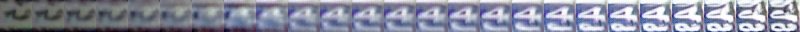}
    \includegraphics[width=\textwidth]{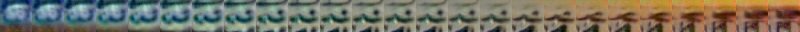}
    \includegraphics[width=\textwidth]{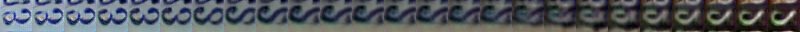}
    \includegraphics[width=\textwidth]{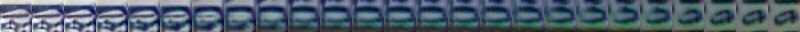}
    \includegraphics[width=\textwidth]{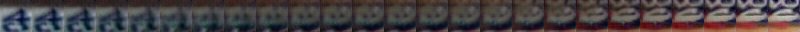}
    \includegraphics[width=\textwidth]{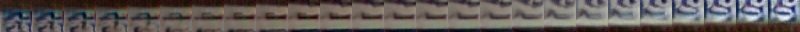}
    \includegraphics[width=\textwidth]{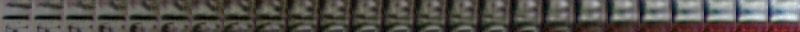}
    ~\\
    \includegraphics[width=\textwidth]{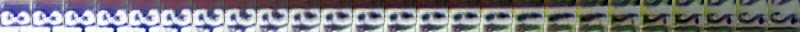}
    \includegraphics[width=\textwidth]{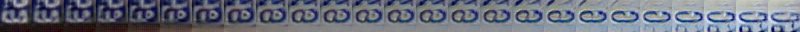}
    \includegraphics[width=\textwidth]{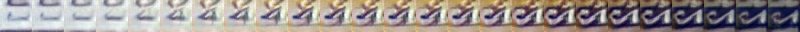}
    \includegraphics[width=\textwidth]{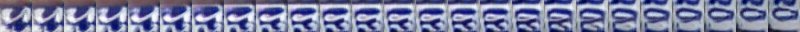}
    \includegraphics[width=\textwidth]{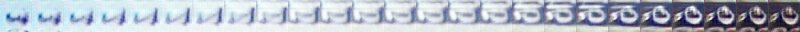}
    \includegraphics[width=\textwidth]{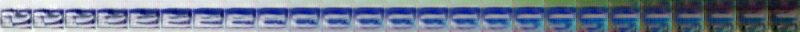}
    \includegraphics[width=\textwidth]{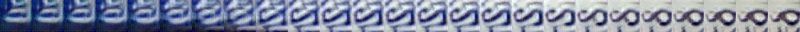}
    \includegraphics[width=\textwidth]{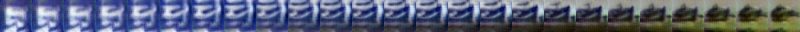}
    \includegraphics[width=\textwidth]{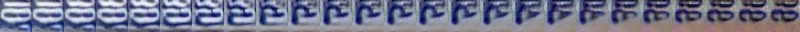}
    \includegraphics[width=\textwidth]{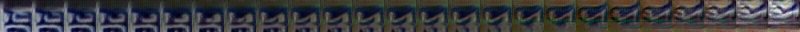}
    \includegraphics[width=\textwidth]{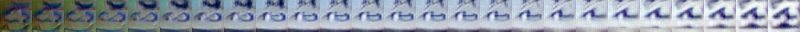}
    \includegraphics[width=\textwidth]{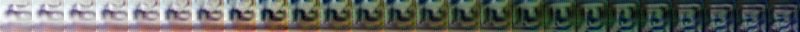}
    ~\\
    \caption{Sphere geodesic traversal on SVHN with a normal (top) and gamma (bottom) prior.}
\end{figure}

\begin{figure}[ht]
    \centering
    \includegraphics[width=\textwidth]{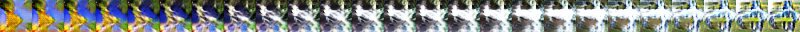}
    \includegraphics[width=\textwidth]{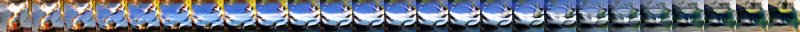}
    \includegraphics[width=\textwidth]{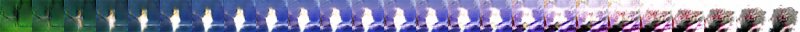}
    \includegraphics[width=\textwidth]{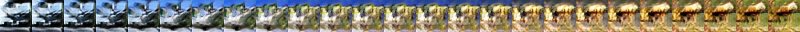}
    \includegraphics[width=\textwidth]{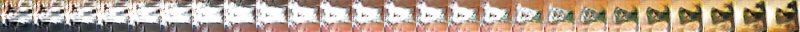}
    \includegraphics[width=\textwidth]{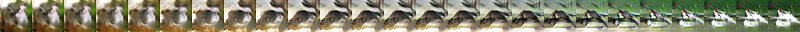}
    \includegraphics[width=\textwidth]{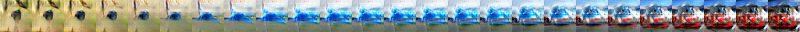}
    \includegraphics[width=\textwidth]{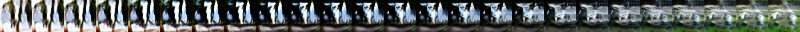}
    \includegraphics[width=\textwidth]{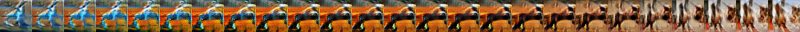}
    \includegraphics[width=\textwidth]{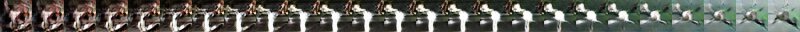}
    \includegraphics[width=\textwidth]{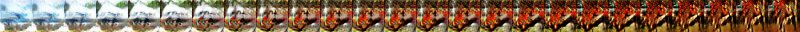}
    \includegraphics[width=\textwidth]{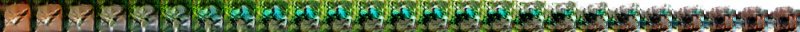}
    ~\\
    \includegraphics[width=\textwidth]{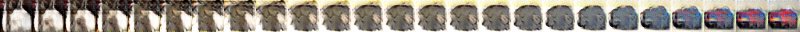}
    \includegraphics[width=\textwidth]{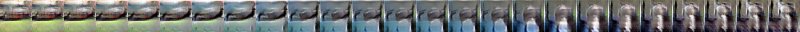}
    \includegraphics[width=\textwidth]{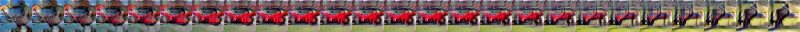}
    \includegraphics[width=\textwidth]{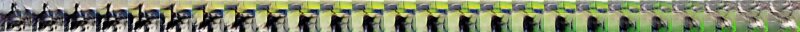}
    \includegraphics[width=\textwidth]{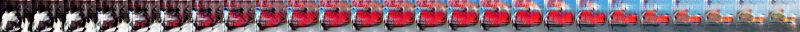}
    \includegraphics[width=\textwidth]{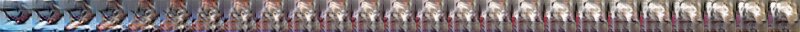}
    \includegraphics[width=\textwidth]{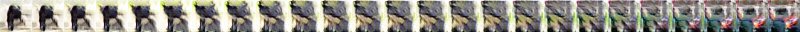}
    \includegraphics[width=\textwidth]{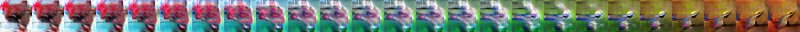}
    \includegraphics[width=\textwidth]{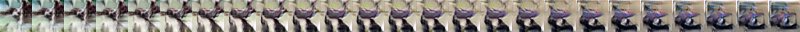}
    \includegraphics[width=\textwidth]{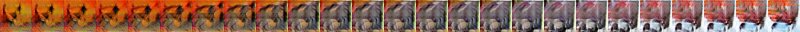}
    \includegraphics[width=\textwidth]{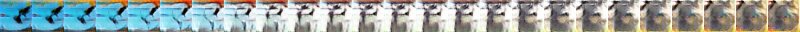}
    \includegraphics[width=\textwidth]{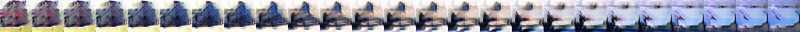}
    ~\\
    \caption{Sphere geodesic traversal on CIFAR10 with a normal (top) and gamma (bottom) prior.}
\end{figure}
\clearpage

\section{More experiments comparing latent space dimensionality}

\begin{table}[ht]
    \centering
    \begin{tabular}{ll}
        \multicolumn{2}{l}{normal prior} \\
        {d=2} & 
    \includegraphics[width=.9\textwidth]{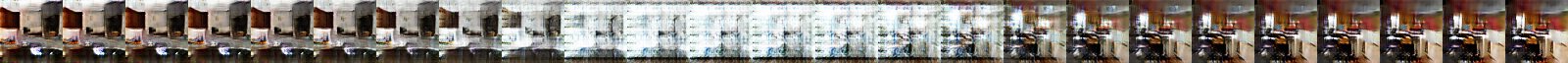} \\
        & \includegraphics[width=.9\textwidth]{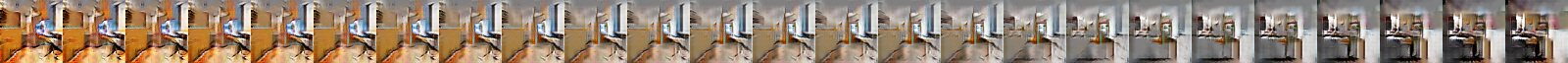} \\
        {d=3} &
    \includegraphics[width=.9\textwidth]{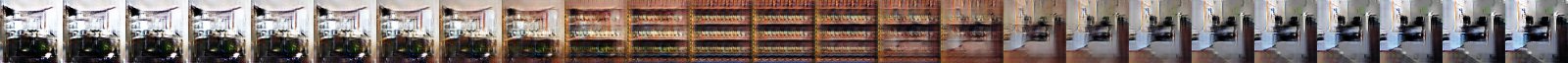} \\
        & \includegraphics[width=.9\textwidth]{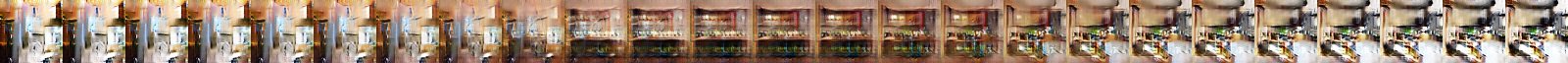} \\
    {d=5} &
    \includegraphics[width=.9\textwidth]{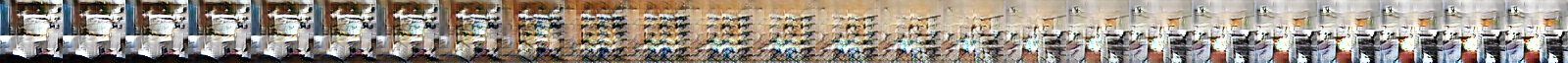} \\
        & \includegraphics[width=.9\textwidth]{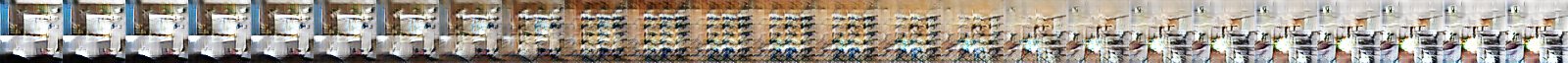} \\
    {d=10} &
    \includegraphics[width=.9\textwidth]{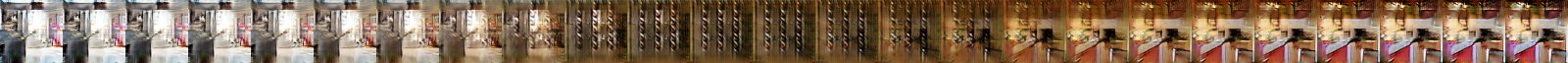} \\
        & \includegraphics[width=.9\textwidth]{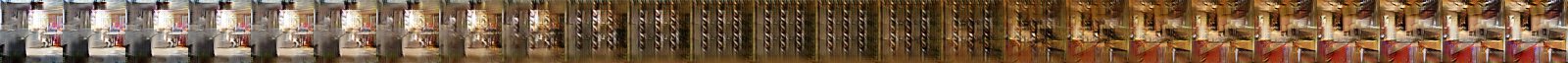} \\
    {d=50} &
    \includegraphics[width=.9\textwidth]{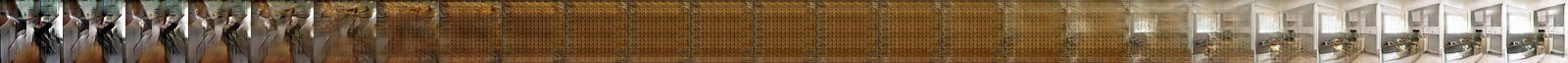} \\
        & \includegraphics[width=.9\textwidth]{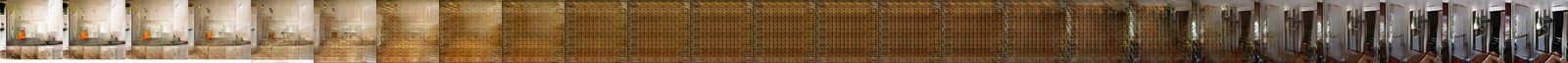} \\
    {d=200} &
    \includegraphics[width=.9\textwidth]{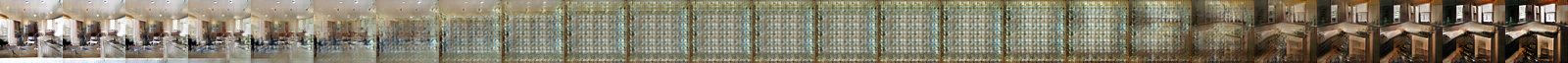} \\
        & \includegraphics[width=.9\textwidth]{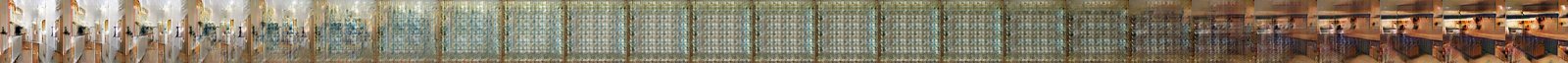} \\
        \multicolumn{2}{l}{gamma prior} \\
        {d=2} & 
    \includegraphics[width=.9\textwidth]{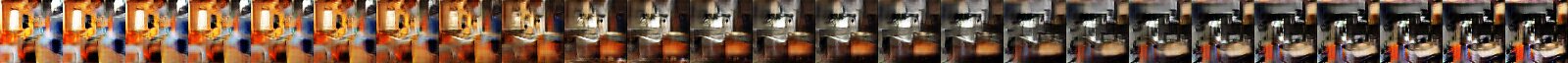} \\
        & \includegraphics[width=.9\textwidth]{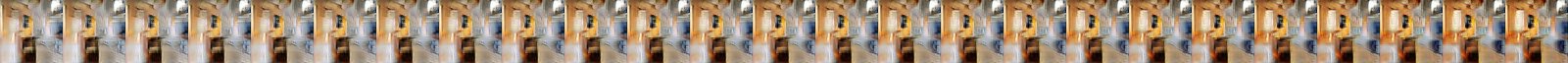} \\
        {d=3} &
    \includegraphics[width=.9\textwidth]{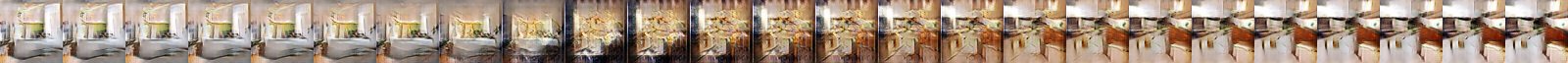} \\
        & \includegraphics[width=.9\textwidth]{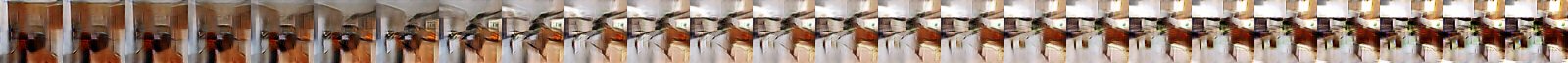} \\
    {d=5} &
    \includegraphics[width=.9\textwidth]{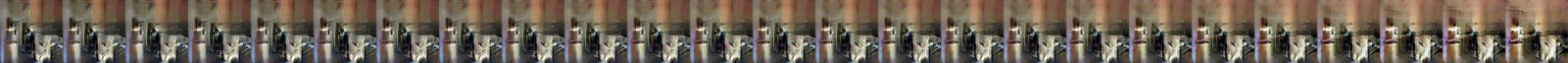} \\
        & \includegraphics[width=.9\textwidth]{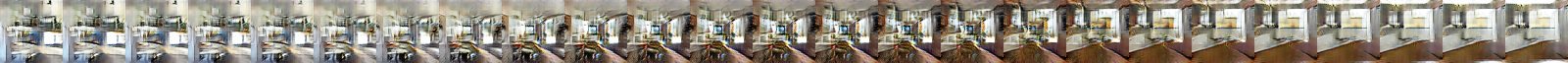} \\
    {d=10} &
    \includegraphics[width=.9\textwidth]{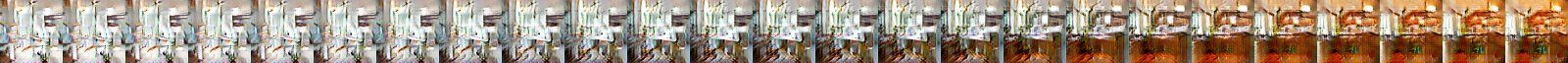} \\
        & \includegraphics[width=.9\textwidth]{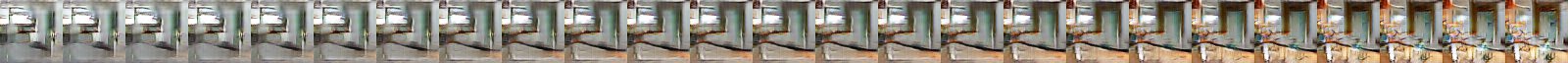} \\
    {d=50} &
    \includegraphics[width=.9\textwidth]{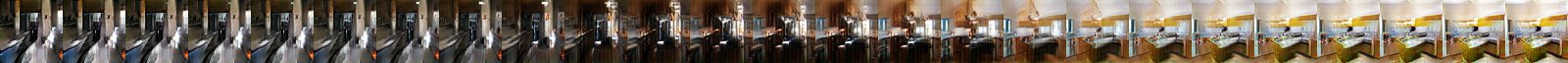} \\
        & \includegraphics[width=.9\textwidth]{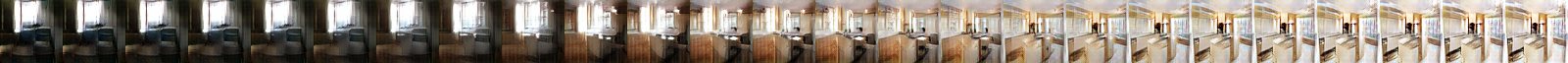} \\
    {d=200} &
    \includegraphics[width=.9\textwidth]{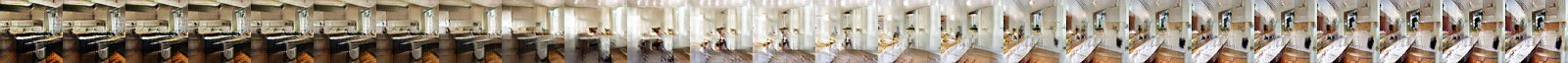} \\
        & \includegraphics[width=.9\textwidth]{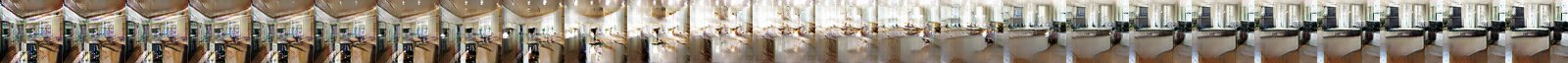}
    \end{tabular}
    \caption{Straight traversal in generators with different latent space dimensionality on LSUN kutchen with a normal (top) and gamma (bottom) prior.}
\end{table}

\begin{table}[ht]
    \centering
    \begin{tabular}{ll}
        \multicolumn{2}{l}{normal prior} \\
        {d=2} & 
    \includegraphics[width=.9\textwidth]{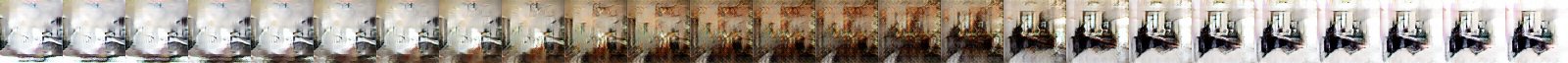} \\
        & \includegraphics[width=.9\textwidth]{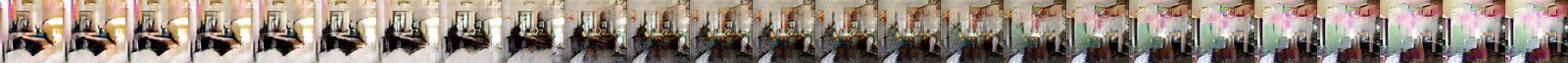} \\
        {d=3} &
    \includegraphics[width=.9\textwidth]{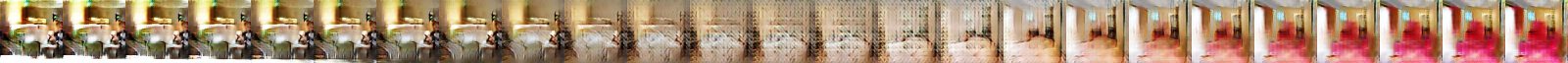} \\
        & \includegraphics[width=.9\textwidth]{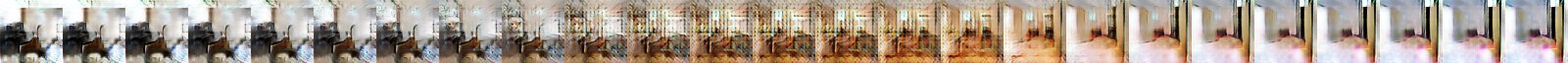} \\
    {d=5} &
    \includegraphics[width=.9\textwidth]{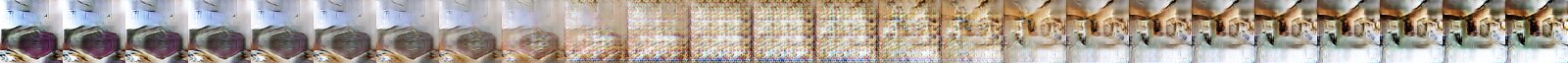} \\
        & \includegraphics[width=.9\textwidth]{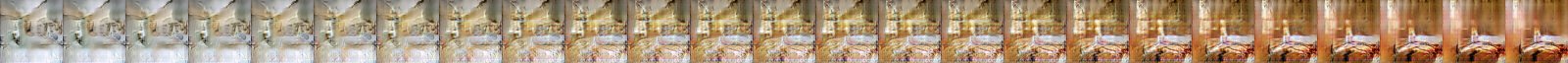} \\
    {d=10} &
    \includegraphics[width=.9\textwidth]{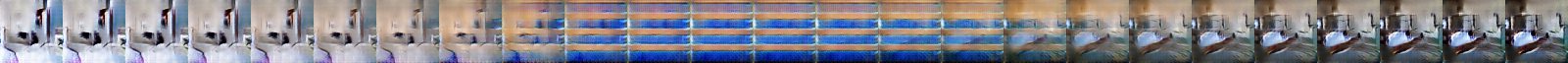} \\
        & \includegraphics[width=.9\textwidth]{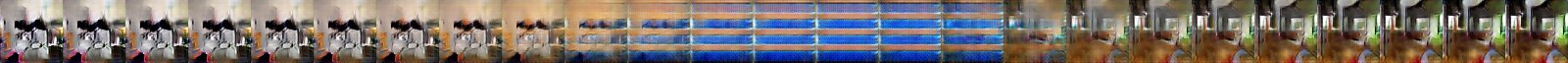} \\
    {d=50} &
    \includegraphics[width=.9\textwidth]{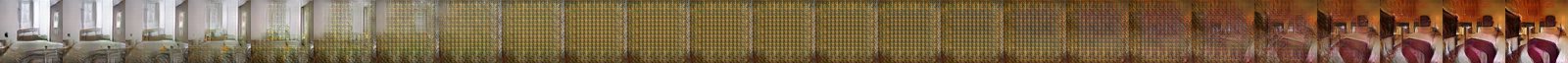} \\
        & \includegraphics[width=.9\textwidth]{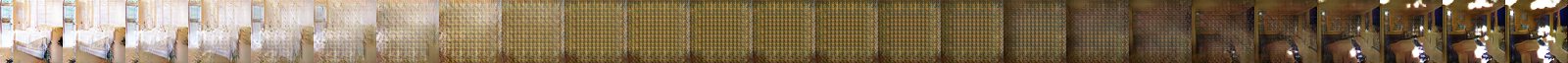} \\
    {d=200} &
    \includegraphics[width=.9\textwidth]{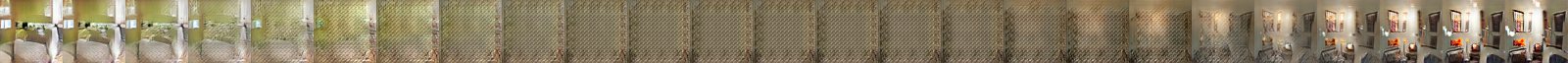} \\
        & \includegraphics[width=.9\textwidth]{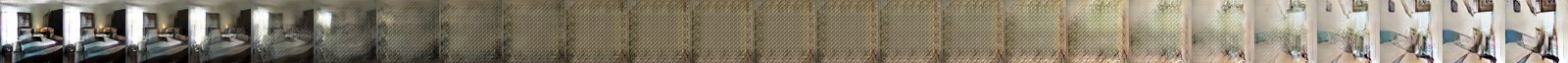} \\
        \multicolumn{2}{l}{gamma prior} \\
        {d=2} & 
    \includegraphics[width=.9\textwidth]{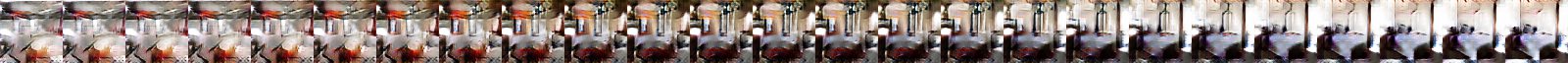} \\
        & \includegraphics[width=.9\textwidth]{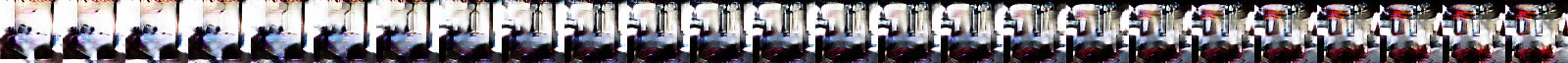} \\
        {d=3} &
    \includegraphics[width=.9\textwidth]{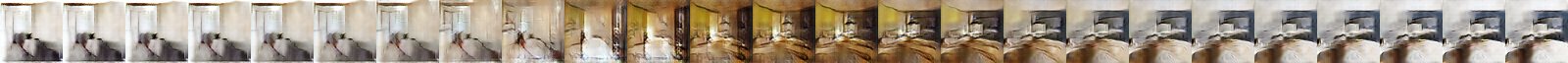} \\
        & \includegraphics[width=.9\textwidth]{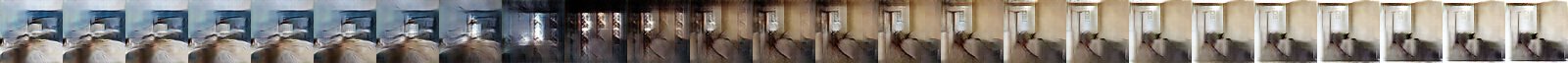} \\
    {d=5} &
    \includegraphics[width=.9\textwidth]{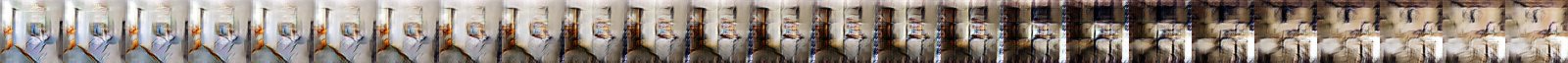} \\
        & \includegraphics[width=.9\textwidth]{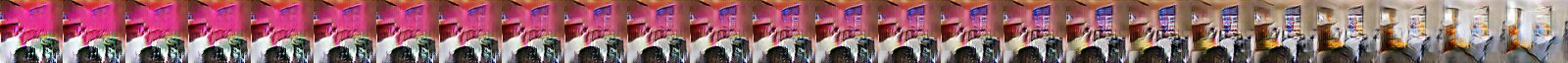} \\
    {d=10} &
    \includegraphics[width=.9\textwidth]{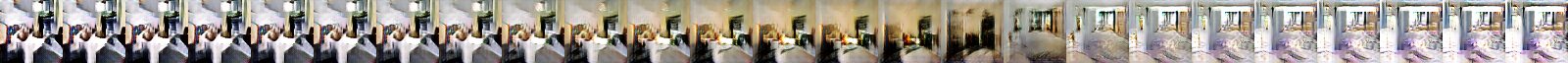} \\
        & \includegraphics[width=.9\textwidth]{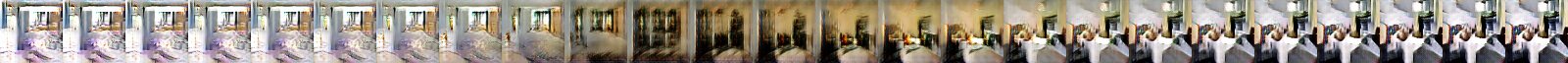} \\
    {d=50} &
    \includegraphics[width=.9\textwidth]{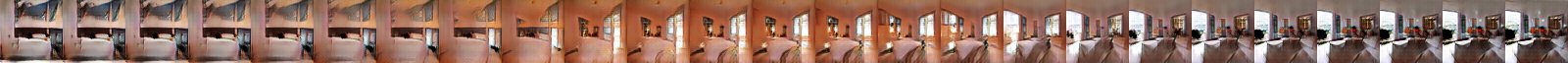} \\
        & \includegraphics[width=.9\textwidth]{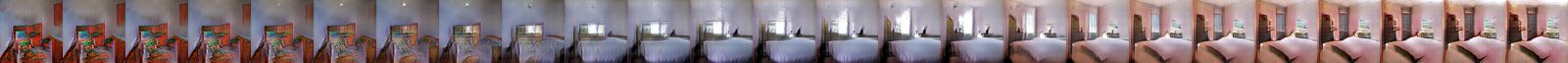} \\
    {d=200} &
    \includegraphics[width=.9\textwidth]{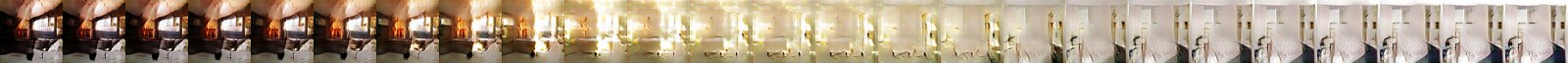} \\
        & \includegraphics[width=.9\textwidth]{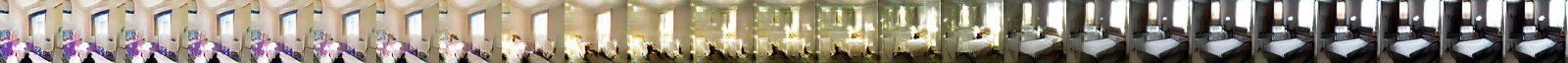}
    \end{tabular}
    \caption{Straight traversal in generators with different latent space dimensionality on LSUN bedroom with a normal (top) and gamma (bottom) prior.}
\end{table}

\begin{table}[ht]
    \centering
    \begin{tabular}{ll}
        \multicolumn{2}{l}{normal prior} \\
        {d=2} & 
    \includegraphics[width=.9\textwidth]{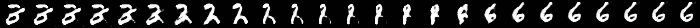} \\
        & \includegraphics[width=.9\textwidth]{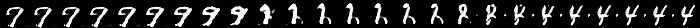} \\
        {d=3} &
    \includegraphics[width=.9\textwidth]{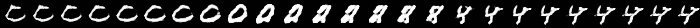} \\
        & \includegraphics[width=.9\textwidth]{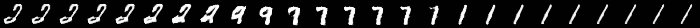} \\
    {d=5} &
    \includegraphics[width=.9\textwidth]{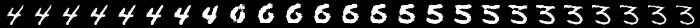} \\
        & \includegraphics[width=.9\textwidth]{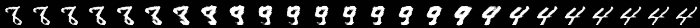} \\
    {d=10} &
    \includegraphics[width=.9\textwidth]{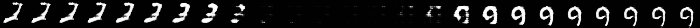} \\
        & \includegraphics[width=.9\textwidth]{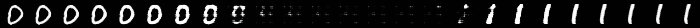} \\
    {d=50} &
    \includegraphics[width=.9\textwidth]{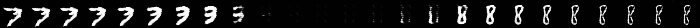} \\
        & \includegraphics[width=.9\textwidth]{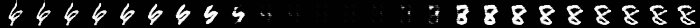} \\
    {d=200} &
    \includegraphics[width=.9\textwidth]{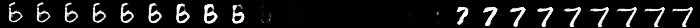} \\
        & \includegraphics[width=.9\textwidth]{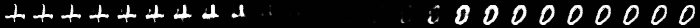} \\
        \multicolumn{2}{l}{gamma prior} \\
        {d=2} & 
    \includegraphics[width=.9\textwidth]{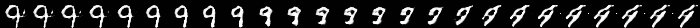} \\
        & \includegraphics[width=.9\textwidth]{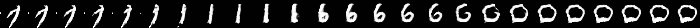} \\
        {d=3} &
    \includegraphics[width=.9\textwidth]{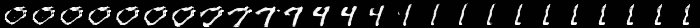} \\
        & \includegraphics[width=.9\textwidth]{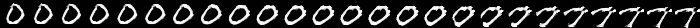} \\
    {d=5} &
    \includegraphics[width=.9\textwidth]{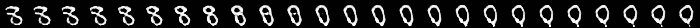} \\
        & \includegraphics[width=.9\textwidth]{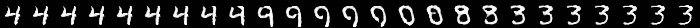} \\
    {d=10} &
    \includegraphics[width=.9\textwidth]{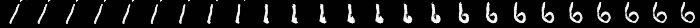} \\
        & \includegraphics[width=.9\textwidth]{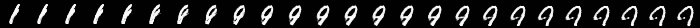} \\
    {d=50} &
    \includegraphics[width=.9\textwidth]{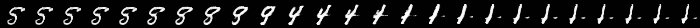} \\
        & \includegraphics[width=.9\textwidth]{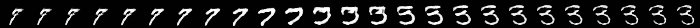} \\
    {d=200} &
    \includegraphics[width=.9\textwidth]{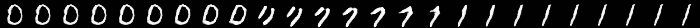} \\
        & \includegraphics[width=.9\textwidth]{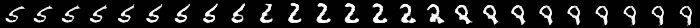}
    \end{tabular}
    \caption{Straight traversal in generators with different latent space dimensionality on MNIST with a normal (top) and gamma (bottom) prior.}
\end{table}

\begin{table}[ht]
    \centering
    \begin{tabular}{ll}
        \multicolumn{2}{l}{normal prior} \\
        {d=2} & 
    \includegraphics[width=.9\textwidth]{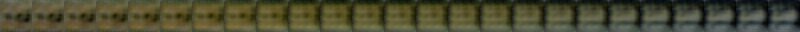} \\
        & \includegraphics[width=.9\textwidth]{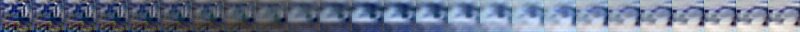} \\
        {d=3} &
    \includegraphics[width=.9\textwidth]{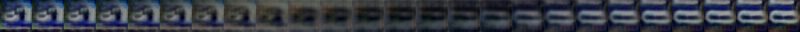} \\
        & \includegraphics[width=.9\textwidth]{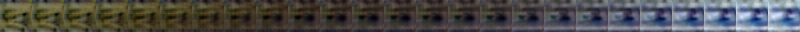} \\
    {d=5} &
    \includegraphics[width=.9\textwidth]{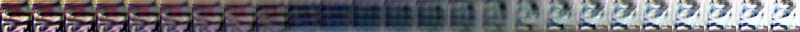} \\
        & \includegraphics[width=.9\textwidth]{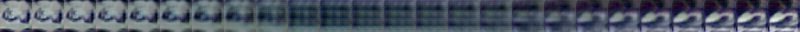} \\
    {d=10} &
    \includegraphics[width=.9\textwidth]{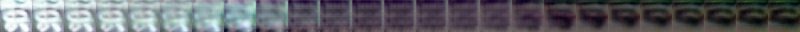} \\
        & \includegraphics[width=.9\textwidth]{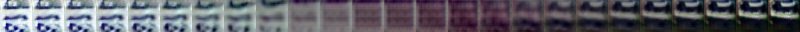} \\
    {d=50} &
    \includegraphics[width=.9\textwidth]{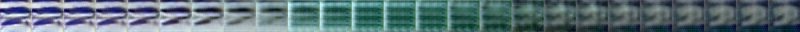} \\
        & \includegraphics[width=.9\textwidth]{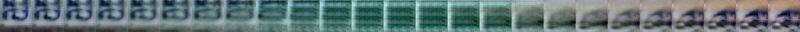} \\
    {d=200} &
    \includegraphics[width=.9\textwidth]{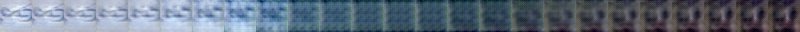} \\
        & \includegraphics[width=.9\textwidth]{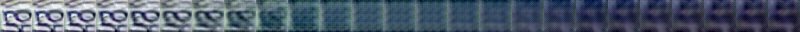} \\
        \multicolumn{2}{l}{gamma prior} \\
        {d=2} & 
    \includegraphics[width=.9\textwidth]{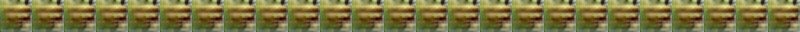} \\
        & \includegraphics[width=.9\textwidth]{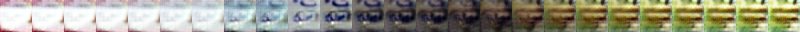} \\
        {d=3} &
    \includegraphics[width=.9\textwidth]{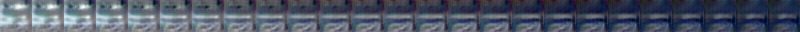} \\
        & \includegraphics[width=.9\textwidth]{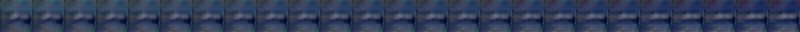} \\
    {d=5} &
    \includegraphics[width=.9\textwidth]{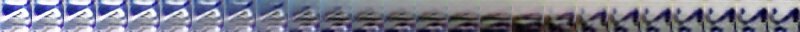} \\
        & \includegraphics[width=.9\textwidth]{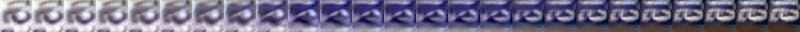} \\
    {d=10} &
    \includegraphics[width=.9\textwidth]{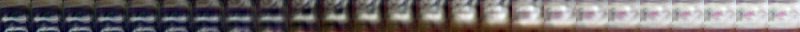} \\
        & \includegraphics[width=.9\textwidth]{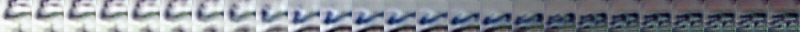} \\
    {d=50} &
    \includegraphics[width=.9\textwidth]{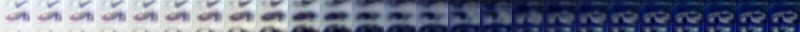} \\
        & \includegraphics[width=.9\textwidth]{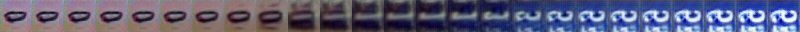} \\
    {d=200} &
    \includegraphics[width=.9\textwidth]{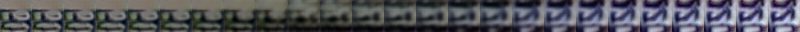} \\
        & \includegraphics[width=.9\textwidth]{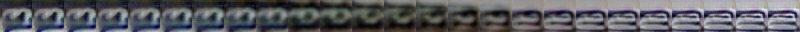}
    \end{tabular}
    \caption{Straight traversal in generators with different latent space dimensionality on SVHN with a normal (top) and gamma (bottom) prior.}
\end{table}

\begin{table}[ht]
    \centering
    \begin{tabular}{ll}
        \multicolumn{2}{l}{normal prior} \\
        {d=2} & 
    \includegraphics[width=.9\textwidth]{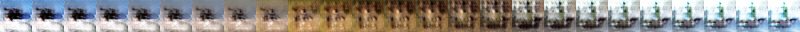} \\
        & \includegraphics[width=.9\textwidth]{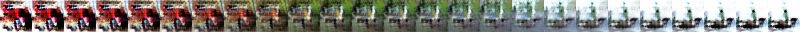} \\
        {d=3} &
    \includegraphics[width=.9\textwidth]{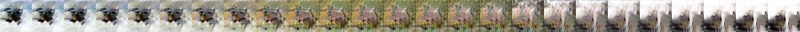} \\
        & \includegraphics[width=.9\textwidth]{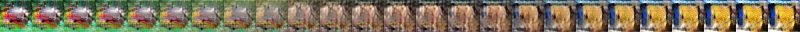} \\
    {d=5} &
    \includegraphics[width=.9\textwidth]{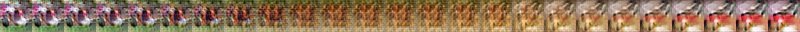} \\
        & \includegraphics[width=.9\textwidth]{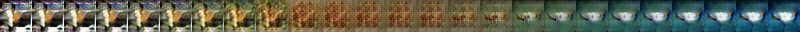} \\
    {d=10} &
    \includegraphics[width=.9\textwidth]{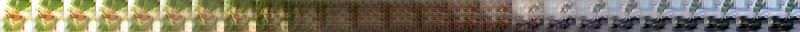} \\
        & \includegraphics[width=.9\textwidth]{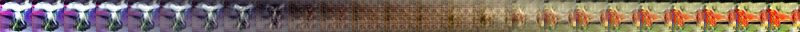} \\
    {d=50} &
    \includegraphics[width=.9\textwidth]{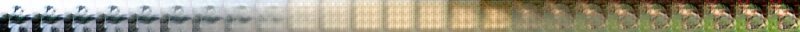} \\
        & \includegraphics[width=.9\textwidth]{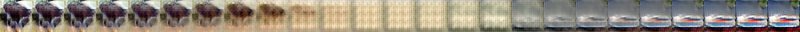} \\
    {d=200} &
    \includegraphics[width=.9\textwidth]{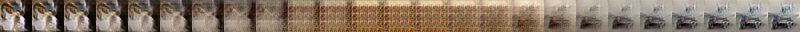} \\
        & \includegraphics[width=.9\textwidth]{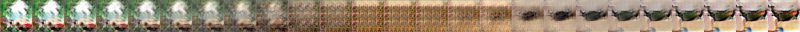} \\
        \multicolumn{2}{l}{gamma prior} \\
        {d=2} & 
    \includegraphics[width=.9\textwidth]{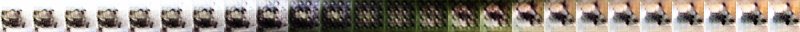} \\
        & \includegraphics[width=.9\textwidth]{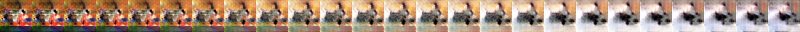} \\
        {d=3} &
    \includegraphics[width=.9\textwidth]{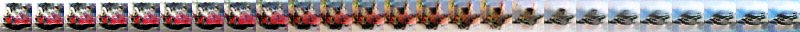} \\
        & \includegraphics[width=.9\textwidth]{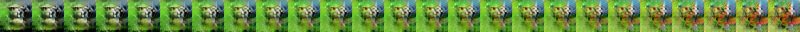} \\
    {d=5} &
    \includegraphics[width=.9\textwidth]{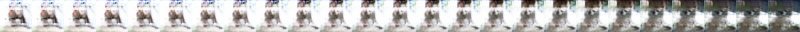} \\
        & \includegraphics[width=.9\textwidth]{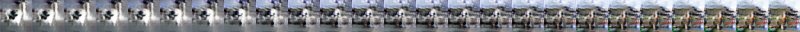} \\
    {d=10} &
    \includegraphics[width=.9\textwidth]{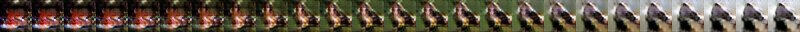} \\
        & \includegraphics[width=.9\textwidth]{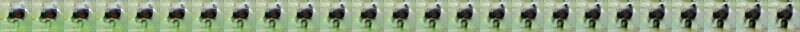} \\
    {d=50} &
    \includegraphics[width=.9\textwidth]{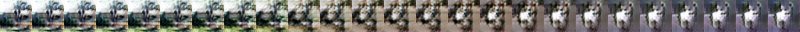} \\
        & \includegraphics[width=.9\textwidth]{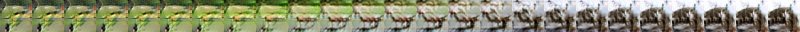} \\
    {d=200} &
    \includegraphics[width=.9\textwidth]{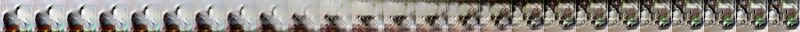} \\
        & \includegraphics[width=.9\textwidth]{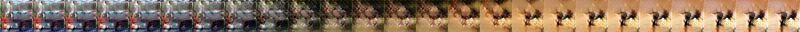}
    \end{tabular}
    \caption{Straight traversal in generators with different latent space dimensionality on CIFAR10 with a normal (top) and gamma (bottom) prior.}
\end{table}

\clearpage

\section{Experiment Setup}
For our experiments, we use a standard DCGAN architecture featuring 5 deconvolutional and convolutional layers with $4\times 4$ filters applied in strides of $2$ in the generator and discriminator, respectively.
We use ReLU nonlinearities and batch normalization in the generator, while the discriminator features leaky ReLU nonlinearities and batch normalization from the 2nd layer on.
The latent space for all models is of dimension 100 and the scale parameters for both the normal and gamma distributions are set to $1.0$.
The networks are trained using RMSProp with a learning rate of $0.0003$ and mini-batches of size 100.

The samples for the CelebA dataset have been cropped to $118\times 118$ and then resized to $64\times64$.
 
\end{document}